%% file: iclr2027_conference.tex
\documentclass{article} 
\usepackage{iclr2027_conference,times}

\iclrfinalcopy
\input{math_commands.tex}

\usepackage{hyperref}
\usepackage{url}

\usepackage{amsmath}
\usepackage{enumitem}
\usepackage{amssymb}
\usepackage{amsthm}
\usepackage{algorithm}
\usepackage{algorithmic}
\usepackage{booktabs}
\usepackage{array}
\usepackage{multirow}
\usepackage{graphicx}
\usepackage{wrapfig}
\usepackage{xcolor}
\usepackage{tabularx}

\newtheorem{proposition}{Proposition}
\newtheorem{theorem}{Theorem}

\title{Large Distant Gradients Need Not Be Reliable: reliability-weighted credit assignment for long-horizon autoregressive forecasting}

\author{Junhao Zhao \\
University of Maryland, College Park \\
\texttt{jzhao121@umd.edu}
\And
David Michael Simberg \\
University of Maryland, College Park \\
\texttt{dsimberg@umd.edu}
\And
Jacob Kang \\
University of Maryland, College Park \\
\texttt{jkang115@umd.edu}
\And
Colin Connor Kurniawan \\
University of Maryland, College Park \\
\texttt{ckurniaw@umd.edu}
\And
Nan Xu\thanks{Corresponding author.} \\
University of Maryland, College Park \\
\texttt{nanxu@umd.edu}
}

\hypersetup{hidelinks}
\begin{document}

\addtocontents{toc}{\protect\setcounter{tocdepth}{-1}}

\maketitle
\fancyhead{}
\renewcommand{\headrulewidth}{0pt}
\begin{abstract}
In autoregressive forecasting, long prediction rollouts provide distant supervision, but backpropagation through time (BPTT) carries gradients from those losses through many autoregressive steps. Repeated Jacobian products can make distant gradients dominate the update while amplifying predictable signal and unpredictable innovation together; a large distant gradient therefore need not carry reliable learning signal. Motivated by this observation, we introduce Internal Dual-Wiener routing (Internal-DW), a principled backward-only intervention that preserves the full forward rollout and all horizon losses while reliability-weighting internal gradient routes. At each residual block, we derive bounded Wiener gains for the identity and nonlinear routes that balance preserving predictable learning signal against suppressing unpredictable variation, and estimate them from route-level gradient statistics and an explicit noise model. In a controlled system with known gradient signal-to-noise ratio (SNR), we show that distant gradients can grow even as their SNR falls, and that Internal-DW reduces error in recovering predictable gradient signals and improves forecasting. On four history-dominated, weak-drive testbeds, Internal-DW reduces forecast error by 5.2\%–13.8\% relative to full BPTT, outperforms gradient clipping and Jacobian regularization on three testbeds, with similar performance on shear flow, and outperforms validation-selected truncated BPTT (TBPTT) on three. It also extends or preserves the fitted optimal training-horizon range across these four testbeds. Across the full benchmark suite, the current Internal-DW estimator has a clear applicability boundary: its benefit diminishes or reverses when usable history is limited or when the selected sampler fails to represent dominant drive-dependent variation. The results show that retaining long-horizon supervision does not require trusting every backward contribution equally.
\end{abstract}

\section{Introduction}
In autoregressive forecasting, training over long closed-loop rollouts provides
supervision at distant forecast horizons, while backpropagation through time
(BPTT) carries gradients from those distant losses backward through many earlier
autoregressive steps. Classical analyses show that these distant gradients can grow
rapidly in recurrent models
~\citep{bengio1994learning,pascanu2013difficulty}, and related sensitivity
amplification has been observed in neural PDE solvers and weather models
~\citep{mccabe2023towards,pervez2026controlling}.  
But a large gradient need not be a reliable one~\citep{parmas2018pipps}. We ask when distant
BPTT gradients acquire high leverage without acquiring correspondingly reliable
learning signal, and how long-horizon supervision can be retained without
trusting every backward contribution equally. 

To see why gradient magnitude can be misleading, first consider the history-dominated case: an autonomous system without external forcing or
control inputs. The loss at each future step produces a gradient that is propagated backward through earlier applications of the same model, and the contributions from all forecast steps are added to form the parameter update.  The same Jacobian products that amplify a contribution's predictable learning signal also amplify its unpredictable innovation. Consequently, a distant contribution can become large enough to dominate the summed update even when most of its magnitude reflects unreliable variation. Gradient magnitude therefore
measures backward sensitivity, not necessarily useful credit.

\begin{figure}[t]
  \vspace*{-11.5pt}
  \centering
  \setlength{\abovecaptionskip}{0pt}
  \setlength{\belowcaptionskip}{0pt}
  \includegraphics[
    width=\linewidth,
    trim=0 3pt 0 7pt
  ]{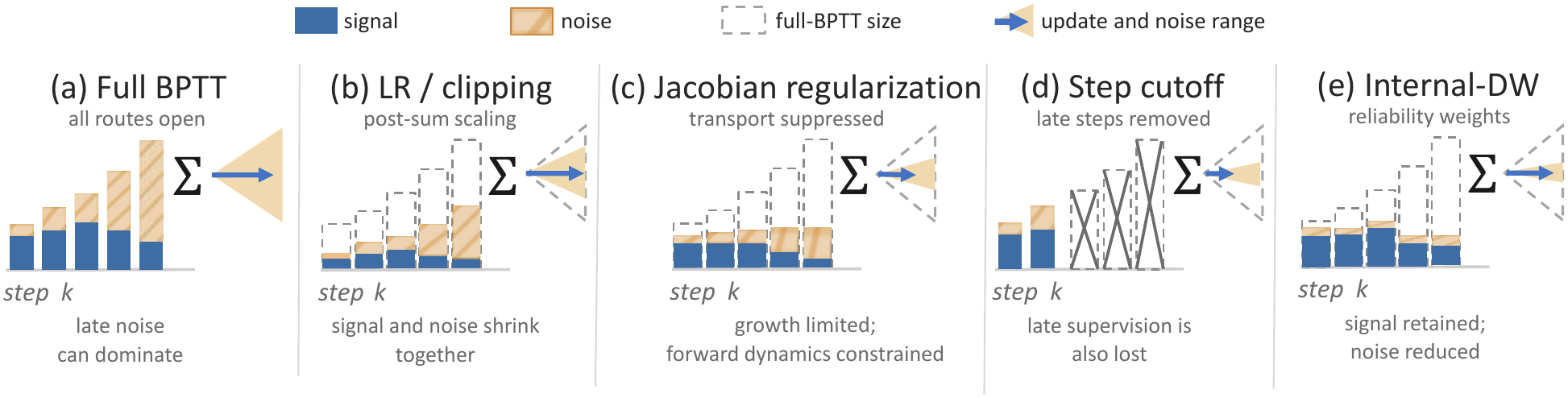}
  \vspace{-10pt}
\caption{\textbf{Common controls reshape long-horizon gradients in different ways.}
Conceptual comparison of forecast-step contributions retained by each
approach and resulting update. A route weight scales signal
and innovation together; these components are not separately observed in
data.}
  \label{fig:intro-gradient-controls}
  \vspace*{-12pt}
\end{figure}
Under full BPTT, all delayed contributions remain open and are summed into
the update (Fig.~\ref{fig:intro-gradient-controls}a). Existing controls address neighboring problems but intervene at different stages of the training process. Gradient clipping~\citep{pascanu2013difficulty} and adaptive optimizers such as Adam~\citep{kingma2015adam} rescale gradients after forecast-step contributions have already been combined and therefore cannot identify which delayed contribution is
unreliable (Fig.~\ref{fig:intro-gradient-controls}b). Forward Jacobian regularization instead limits amplification by constraining
the learned dynamics ~\citep{pervez2026controlling,nie2026jaws}
(Fig.~\ref{fig:intro-gradient-controls}c), whereas shorter rollouts remove distant supervision~\citep{temporal2025horizons}
(Fig.~\ref{fig:intro-gradient-controls}d), and TBPTT cuts long-range
gradient paths. None of these approaches directly asks which delayed contributions remain
reliable \emph{before} they are aggregated. This motivates a backward-only
intervention that preserves the complete rollout, all horizon losses, and the
learned forward dynamics while selectively reducing unreliable delayed credit (Fig.~\ref{fig:intro-gradient-controls}e).

Internal-DW implements reliability weighting inside residual blocks. At a block $y=x+F_\theta(x)$, BPTT maps an incoming gradient through
$I+J_F^\top$; Internal-DW replaces this backward operator with $\alpha I+mJ_F^\top$, assigning separate reliability weights to the identity and nonlinear routes while leaving the block's forward value unchanged. The two gains are chosen by a box-constrained $2\!\times\!2$ Wiener solve that trades distortion of the conditional gradient signal against unpredictable variation. Because signal and noise are not separately observed, the solve requires an explicit noise estimate~\citep{mehra1970identification,belanger1974noise,odelson2006autocovariance}. We therefore make the identifying assumption through two noise samplers: DW-Generic uses centered training residuals as a proxy for unpredictable error, whereas DW-Prior incorporates domain-supported structure when available.


We first test the mechanism in a controlled system where gradient signal-to-noise ratio (SNR) is known. Delayed gradients grow even as their SNR falls, the estimated gains reduce {route-gradient risk}, and Internal-DW achieves the lowest
forecast error among the tested methods. On application data, where the true signal--innovation split is unavailable, held-out gradient tests ask whether large per-horizon gradients transfer across examples, while controlled-noise tests ask whether the estimated gains decrease as noise increases. Across four history-dominated, weak-drive testbeds, Internal-DW improves forecasting and extends or preserves the useful training-horizon range. Its benefit weakens when usable history is limited or when important driven information is not represented by the sampler. These cases define boundaries of the current estimator, not of the broader delayed-credit reliability problem.

We therefore view unreliable delayed credit as the broader training problem,
and Internal-DW as one reliability-weighted correction for it. The current estimator is best suited when useful credit is primarily propagated through state history; separating that credit from newly injected drive remains an important extension.
\section{Related Work} \label{sec:related}
Existing approaches intervene at different stages of long-horizon training.
Scheduled sampling, DAgger, Professor Forcing, and pushforward training alter
the rollout distribution
\citep{bengio2015scheduled,ross2011reduction,lamb2016professor,
brandstetter2022message}, while Jacobian penalties and recurrent stabilization
constrain the learned forward dynamics
\citep{mccabe2023towards,pervez2026controlling,nie2026jaws,ye2025recurrent}.
Recent methods instead modify temporal supervision or credit assignment:
STLW gradually increases the weight of later losses \citep{coker2026scheduled}, HERO trains against a failed-rollout reference \citep{zhang2026hero}, and COLA uses dynamical criticality for online local credit assignment approximating global error propagation \citep{wang2026global}. TBPTT and stop-gradient methods limit delayed backpropagation, while damped BPTT imposes prescribed decay
\citep{tallec2017unbiasing,list2024differentiability,ingraham2019learning}.

Internal-DW differs in leaving the forward rollout, horizon losses, and learned
dynamics unchanged while estimating the reliability of transported internal
VJPs before their route contributions are aggregated. Broader connections to
reinforcement-learning credit assignment, differentiable world models,
recurrent-gradient estimators, covariance identification, and backward-message
methods are discussed in Appendix~\ref{app:related}.

\section{Reliable Gradient Transport in Long-Horizon Rollout}
\label{sec:collapse-theory}
\textbf{Our theoretical framework has three components.}
(1) We formalize the mismatch between delayed-gradient leverage and reliability.
(2) Theorem~1 derives Internal-DW as the minimum-risk bounded linear weighting of residual backward routes, with the forward rollout unchanged.
(3) We show that the optimal weights are not identifiable from open-route gradients alone, requiring an explicit noise model whose assumptions determine the estimator's scope.

\subsection{Large Delayed Gradients Need Not Be Reliable}
\label{sec:theory-estimand}

Jacobian transport controls a delayed gradient's magnitude, not its
reliability, and one amplified contribution can dominate the summed update.
For the $K$-step loss
$\mathcal L=K^{-1}\sum_{k=1}^{K}\lVert r_{t+k}\rVert^2$, where
$r_{t+k}=\hat x_{t+k}-x^\star_{t+k}$, let
$D_{t+k}=d\hat x_{t+k}/d\theta$. Define
$J_\tau:=\partial\hat x_{\tau+1}/\partial\hat x_\tau$,
$G_\tau:=\partial\hat x_{\tau+1}/\partial\theta$, and
$\Phi_{a,q}=J_{a+q-1}\cdots J_a$, with $\Phi_{a,0}=I$. The exact gradient and
its chain-rule expansion are
\begin{equation}
\nabla_\theta\mathcal L
=
\frac{2}{K}\sum_{k=1}^{K}D_{t+k}^\top r_{t+k},
\qquad
D_{t+k}^\top r_{t+k}
=
\sum_{j=0}^{k-1}
G_{t+j}^\top
\Phi_{t+j+1,k-j-1}^\top r_{t+k}.
\label{eq:horizon-gradient-sum}
\end{equation}
Because the same parameters are reused at every rollout step, each step injects
a new local contribution while transporting earlier ones through Jacobian
products. Appendix~\ref{sec:shared-gradient-accumulation} formalizes this
forced sensitivity recurrence and shows that, under a stationary local
linearization, sensitivity can plateau, accumulate, or grow exponentially.
Equation~\ref{eq:horizon-gradient-sum} sums contributions from all such paths,
so a dominant transported term can control the update's magnitude and direction
~\citep{mikhaeil2022difficulty}.

Write $r_{t+k}=b_{t+k}+\eta_{t+k}$, where $b_{t+k}=\E[r_{t+k}\mid\hat x_{t+k-1}]$ is the learnable conditional-mean component and $\eta_{t+k}$ is the unpredictable innovation under this conditioning. Here, ``unpredictable'' is relative to the declared conditioning information and does not imply physically irreducible noise. The same Jacobian product transports both components, so amplification increases the leverage of innovation without making it informative. Even without exponential growth, innovation comparable to the predictable component can perturb the update. We therefore seek a backward-only linear route weighting that minimizes
mean-squared error relative to the predictable delayed contribution, before
rollout-step contributions are aggregated.

\subsection{Internal-DW: Optimal Reliability-Weighted Gradient Transport}
\label{sec:method}

Internal-DW changes only backward propagation at residual merges; the forward
rollout is unchanged. We first define its residual-route operator and then show
that, under an explicit signal--innovation decomposition, its oracle weights
are the unique minimum-risk bounded linear weighting of the two backward
routes. Fig.~\ref{fig:internal-dw-method} summarizes the resulting method.

\paragraph{Backward-only routing.}
\label{sec:dual-wiener}
For a residual block $y=x+F_\theta(x)$, ordinary BPTT maps an incoming
gradient $v$ to $v+J_F^\top v$. Internal-DW instead uses
\begin{equation}
\operatorname{VJP}_{\mathrm{DW}}(v)
=
\alpha v+mJ_F^\top v,
\qquad
J_F=\frac{\partial F_\theta(x)}{\partial x}.
\label{eq:dual-route-jacobian}
\end{equation}
We assign one pair
$w_{k,\ell}=(\alpha_{k,\ell},m_{k,\ell})^\top$ to each horizon $k$ and
residual layer $\ell$.

\vspace{-7pt}
\begin{figure}[H]
\centering
\setlength{\abovecaptionskip}{2pt}
\includegraphics[width=\linewidth]
{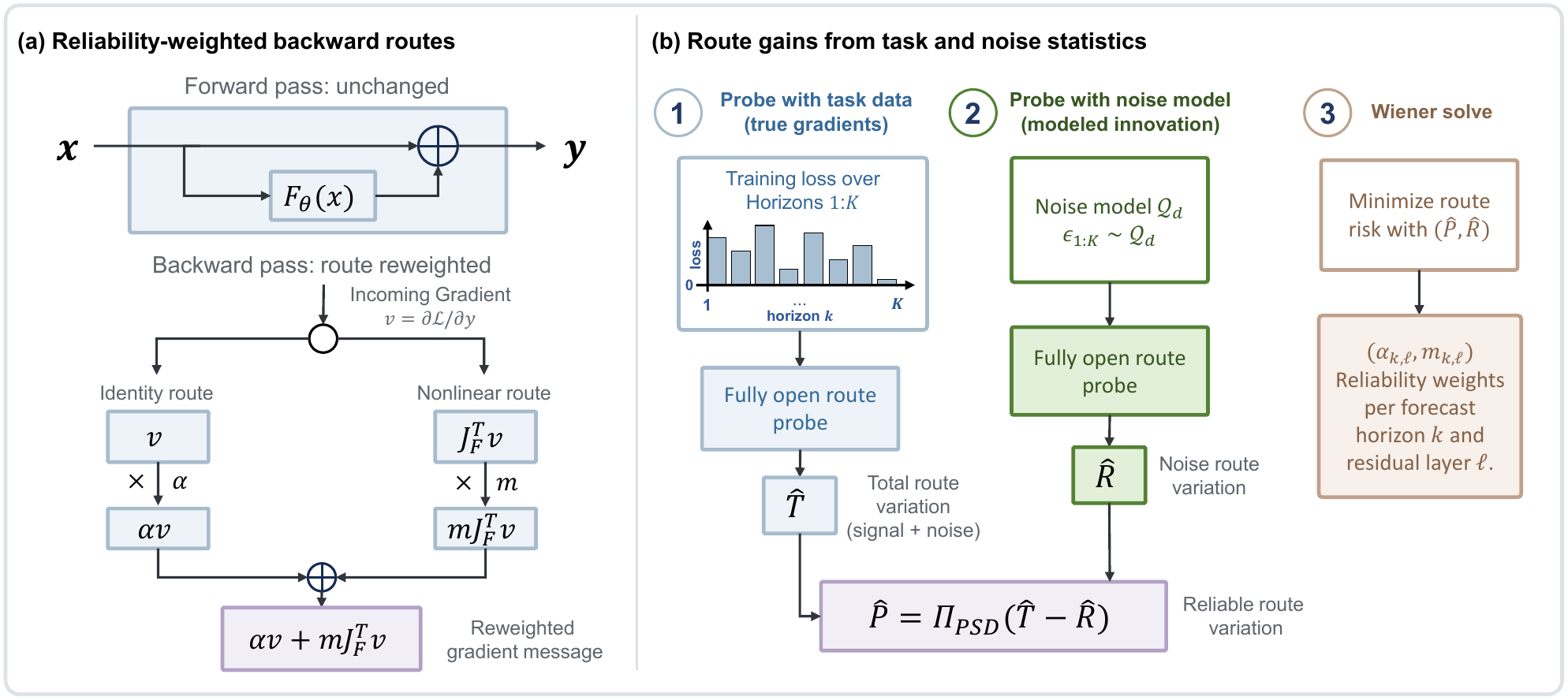}
\vspace{-10pt}
\caption[Internal-DW.]{\textbf{Internal-DW.}
(a) Internal-DW keeps the forward residual block unchanged and
reweights its two backward routes---identity and nonlinear---by
$(\alpha,m)$.
(b) The weights are calibrated per horizon $k$ and layer $\ell$.
Fully open probes produce total route variation $\widehat T$ from the
quadratic total-moment probe and noise route variation $\widehat R$ from a
noise model $\mathcal Q_d$. The predictable component
$\widehat P=\Pi_{\rm PSD}(\widehat T-\widehat R)$ and $\widehat R$
are substituted into the minimum-risk routing objective, yielding reliability
weights $(\alpha_{k,\ell},m_{k,\ell})$.}
\label{fig:internal-dw-method}
\vspace{-12pt}
\end{figure}

\paragraph{Wiener gains.}
At a fixed $(k,\ell)$, let $v_{k,\ell}$
be the gradient arriving at the output of the residual block. The identity
and nonlinear backward routes return
\[
d_I=v_{k,\ell},
\qquad
d_F=J_{F_\ell}^{\top}v_{k,\ell}.
\]
Collect these two route gradients as
$d=[\,d_I\;\;d_F\,]$ and decompose them as
\[
d=s+n,
\qquad
s=\E[d\mid\mathcal C],
\qquad
n=d-s,
\]
where $\mathcal C$ denotes the information under which delayed credit is
predictable. Thus $s$ is the predictable transported gradient and $n$ its
unpredictable innovation. Then $\E[s^\top n]=0$, and
\begin{equation}
T=\E[d^\top d]=P+R,
\qquad
P=\E[s^\top s],
\qquad
R=\E[n^\top n].
\label{eq:route-moment-decomposition}
\end{equation}

\begin{theorem}[Optimal reliability-weighted gradient transport]
\label{thm:optimal-routing}
For $w=(\alpha,m)^\top$, let $dw$ be the routed message and
$s\mathbf 1=s_I+s_F$ the fully open predictable target. If
$T=P+R\succ0$, then the route risk decomposes as
\begin{equation}
\mathcal R(w)
:=
\E\!\left[\left\|dw-s\mathbf 1\right\|_2^2\right]
=
(w-\mathbf 1)^\top P(w-\mathbf 1)+w^\top Rw .
\label{eq:dual-wiener-risk}
\end{equation}
The corresponding unconstrained and bounded optima are, respectively,
\begin{equation}
w_{\rm unc}=T^{-1}P\mathbf 1,
\qquad
w^\star=\arg\min_{w\in[0,1]^2}\mathcal R(w).
\label{eq:oracle-internal-dw-optimum}
\end{equation}
Hence $w^\star$ is the $T$-metric projection of $w_{\rm unc}$ onto
$[0,1]^2$. Since the fully open BPTT operator $w=\mathbf 1$ is feasible,
\begin{equation}
\mathcal R(w^\star)
\le
\mathcal R(\mathbf 1)
=
\mathbf 1^\top R\mathbf 1,
\label{eq:internal-dw-vs-bptt-risk}
\end{equation}
with strict inequality whenever $\mathbf 1$ is not itself optimal.
\end{theorem}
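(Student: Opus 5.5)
The plan is to reduce everything to a quadratic in $w\in\mathbb R^2$ and then use standard convex-quadratic facts.

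\textbf{Risk decomposition.} Write the routed message as $dw=(s+n)w=sw+nw$, where $d=[d_I\;d_F]$ is an $p\times 2$ matrix. Then
\[
dw-s\mathbf 1=s(w-\mathbf 1)+nw .
\]
Expanding the squared norm and taking expectations gives three terms:
\begin{itemize}
\item $(w-\mathbf 1)^\top\E[s^\top s](w-\mathbf 1)=(w-\mathbf 1)^\top P(w-\mathbf 1)$;
\item $w^\top\E[n^\top n]w=w^\top Rw$;
\item a cross term $2(w-\mathbf 1)^\top\E[s^\top n]w$.
\end{itemize}
The cross term vanishes by the orthogonality $\E[s^\top n]=0$ stated before \eqref{eq:route-moment-decomposition}. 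That orthogonality holds entrywise: each entry of $n$ is a conditional-mean residual, and $s$ is $\mathcal C$-measurable, so the tower property gives $\E[s_i^\top n_j]=\E[s_i^\top\E[n_j\mid\mathcal C]]=0$ for all route pairs $i,j$. This yields \eqref{eq:dual-wiener-risk}.

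\textbf{Completing the square.} Expanding, and using $T=P+R$,
\[
\mathcal R(w)=w^\top Tw-2w^\top P\mathbf 1+\mathbf 1^\top P\mathbf 1 .
\]
Since $T\succ0$, $\mathcal R$ is strictly convex. Setting the gradient $2Tw-2P\mathbf 1$ to zero gives $w_{\rm unc}=T^{-1}P\mathbf 1$. Completing the square,
\[
\mathcal R(w)=(w-w_{\rm unc})^\top T(w-w_{\rm unc})+\mathrm{const}.
\]
So minimizing $\mathcal R$ over the compact convex box $[0,1]^2$ is exactly minimizing the $T$-norm distance to $w_{\rm unc}$. By strict convexity the minimizer exists and is unique, and it is the $T$-metric projection of $w_{\rm unc}$ onto the box.

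\textbf{Comparison with full BPTT.} Feasibility of $\mathbf 1\in[0,1]^2$ gives $\mathcal R(w^\star)\le\mathcal R(\mathbf 1)$. Substituting $w=\mathbf 1$ into \eqref{eq:dual-wiener-risk} kills the $P$ term, leaving $\mathbf 1^\top R\mathbf 1$. For the strict inequality: if $\mathbf 1$ is not a minimizer, then $\mathcal R(w^\star)<\mathcal R(\mathbf 1)$ directly, and uniqueness of $w^\star$ makes this the same as $\mathbf 1\neq w^\star$.

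\textbf{Expected obstacle.} The only delicate step is justifying that the cross term vanishes as a full $2\times2$ matrix identity, not just as the scalar $\E[s^\top n]=0$ quoted informally. I would state it explicitly through the tower property as above. I would also note that $T\succ0$ is needed only for uniqueness and for the projection interpretation; existence of a minimizer on the box holds in any case.
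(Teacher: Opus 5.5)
Your proposal is correct and follows the paper's proof essentially step for step: orthogonality $\E[s^\top n]=0$ via the tower property, the expansion $\mathcal R(w)=w^\top Tw-2w^\top P\mathbf 1+\mathbf 1^\top P\mathbf 1$ with strict convexity from $T\succ0$, completing the square in the $T$-metric to get the projection characterization, and feasibility of $\mathbf 1$ for the comparison with full BPTT. Your entrywise justification of the $2\times2$ cross-term identity, and your observation that the strict inequality follows directly once $\mathbf 1$ is not a minimizer, are slightly more explicit than the paper's wording but do not change the argument.
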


The first term in Equation~\ref{eq:dual-wiener-risk} measures distortion
of predictable credit from closing the routes, while the second measures
innovation retained by leaving them open. Thus Internal-DW is the
minimum-risk bounded linear trade-off between preserving predictable credit
and suppressing transported innovation, rather than a prescribed gradient
decay. For a single route, the unconstrained solution reduces to
$P/(P+R)$~\citep{wiener1949extrapolation}. The box constraint prevents sign
reversal and route-wise amplification. Appendix~\ref{app:optimal-routing-proof}
gives the proof and Appendix~\ref{app:route-weight-solve} gives the exact
constrained solve.

\paragraph{Plug-in estimation.}
The oracle moments $P$ and $R$ in Theorem~\ref{thm:optimal-routing} are not
separately observed. Fully open total-moment and noise probes produce route
messages $d^{\rm tot}$ and $d^{\rm noise}$, with the latter obtained by
transporting a sample from $\mathcal Q_d$. Suppressing $(k,\ell)$, we estimate
\begin{equation}
\widehat T
=
\operatorname{EMA}\!\left[(d^{\rm tot})^\top d^{\rm tot}\right],
\qquad
\widehat R
=
\operatorname{EMA}\!\left[(d^{\rm noise})^\top d^{\rm noise}\right],
\qquad
\widehat P
=
\Pi_{\rm PSD}(\widehat T-\widehat R).
\label{eq:plugin-route-moments}
\end{equation}
The total-moment probe uses the quadratic output loss specified in
Appendix~\ref{app:route-moment-calibration}, independently of the task loss
used for parameter updates. Substituting $(\widehat P,\widehat R)$ into
Equation~\ref{eq:dual-wiener-risk} and solving over $w\in[0,1]^2$ gives the
applied Internal-DW gains. Section~\ref{sec:gain-estimation} states the
assumptions on $\mathcal Q_d$; Appendix
Sections~\ref{app:routing-placement}--\ref{app:calibration-cost} give the
probes, implementation, and constrained solve; and Appendix
Sections~\ref{app:generic-sampler}--\ref{app:sampler-interface} give the
sampler constructions.

\subsection{Noise Identifiability and Sampler Assumptions}
\label{sec:gain-estimation}
Theorem~\ref{thm:optimal-routing} determines the oracle Internal-DW route
weights $w^\star$ from the latent decomposition $T=P+R$. However, the open
route law does not identify this decomposition by itself.

\paragraph{Route-law non-identifiability.}
Even the complete law of the open route vector does not, in general, identify
the split $T=P+R$, and hence does not identify the oracle routing weights.
To see the ambiguity, for any $0\preceq R\preceq T$, independent variables
$s\sim\mathcal N(0,T-R)$ and $n\sim\mathcal N(0,R)$ produce the same observed
distribution $z=s+n\sim\mathcal N(0,T)$, while different choices of $R$ can
yield different unconstrained optima $w_{\rm unc}=T^{-1}(T-R)\mathbf 1$. This non-identifiability is structural rather than a finite-sample limitation; Appendix~\ref{app:noise-nonidentifiability} proves the result formally. Internal-DW supplies the required identifying structure through $\mathcal Q_d$, which generates the output-noise trajectories used to estimate $R$.


\paragraph{DW-Generic.} DW-Generic uses centered, lagged training-residual trajectories, with a random trajectory-wide sign, as proxy samples of unpredictable innovation. Its assumption is correspondingly strong: centered prediction residuals must be a useful proxy for unpredictable innovation. If they contain systematic error that the forecasting model could still learn, DW-Generic incorrectly treats that
variation as noise.
\vspace{-0.2cm}
\paragraph{DW-Prior.} DW-Prior uses either cross-fitted residuals from a domain-supported predictor or samples from a domain-supported covariance model. The predictor-based version treats held-out variation unexplained by the predictor as noise and is appropriate when the predictor captures the conditional mean under the
declared conditioning information. The covariance-based version treats
zero-mean draws from the fitted covariance model as noise and is appropriate
when the selected covariance family captures the relevant dependence. For the
spatial samplers used here, this requires approximate translation stationarity
and a spectrum that remains stable during calibration.

Exact sampler constructions and implementation details are given in Appendix
Sections~\ref{app:generic-sampler}--\ref{app:sampler-interface}.
Appendix Table~\ref{tab:identification-summary} summarizes the
dataset-specific assumptions, supporting references, and retained covariance.
For each dataset, we select between DW-Generic and DW-Prior using validation
loss and then fix that choice for test evaluation.

Let $R_0$ denote the oracle noise moment, $R_Q$ the population noise moment
targeted by the selected sampler, and $\widehat R$ its finite-sample estimate.
Their difference decomposes as
\begin{equation}
\widehat R-R_0
=
\underbrace{(R_Q-R_0)}_{\text{sampler-model mismatch}}
+
\underbrace{(\widehat R-R_Q)}_{\text{finite-sample estimation error}}.
\label{eq:noise-moment-error}
\end{equation}
Thus data can reduce the second term in Equation~\ref{eq:noise-moment-error}, but cannot remove the first when the sampler model is misspecified. In particular, DW-Generic can count predictable model error as noise, while DW-Prior can omit predictable structure or covariance outside its chosen representation. The next subsection explains how these two error sources determine the scope of the current estimator.

\subsection{Scope and Boundaries of Internal-DW}
\label{sec:theory-predictability}
A first scope boundary arises when state history supplies little reliable delayed signal and no other signal source compensates. In this case, the
conditional signal moment $P$ is small relative to the oracle noise moment $R_0$, so the Wiener solution assigns little weight to the corresponding routes. Internal-DW then has little reliable delayed credit to preserve.

A boundary arises from sampler-model mismatch. Useful drive-dependent information should contribute to $P$ and favor a more open oracle route. If $\mathcal Q_d$ does not account for propagated history, observed drive, and their interaction, however, predictable driven variation can instead inflate $R_Q$ relative to $R_0$ along route directions. The applied gain may then close too far and suppress useful gradients. Even under a correctly specified sampler, finite data contribute the estimation error $\widehat R-R_Q$. Appendix~\ref{app:sampler-consistency} gives consistency conditions for the residual-based estimator, while Appendix~\ref{app:gain-error} shows how route-moment errors perturb the Wiener gain and contribute to excess local route risk. The first case is a low-signal scope boundary; sampler-model mismatch and finite-sample error are boundaries of the current estimator, not of the broader delayed-credit reliability problem.

\section{Experimental Setup}
\label{sec:experimental-setup}
\paragraph{Testbeds.} We use one controlled system with known SNR to examine the mechanism and eight application testbeds to evaluate forecasting performance. The controlled system is a stationary, undriven linear-Gaussian AR process. The application testbeds are Mackey--Glass (MG) for autonomous delayed dynamics \citep{mackey1977oscillation}; NARMA for driven nonlinear control \citep{atiya2000new}; iEEG and movie fMRI for neural recordings \citep{keles2024multimodal,vanessen2013hcp}; ETTm1 and ETTm2 for temporal forecasting \citep{zhou2021informer}; and The Well shear flow and WeatherBench-2 for gridded fluid and weather fields \citep{ohana2024well,rasp2024weatherbench2}. {Appendix~\ref{app:known-snr-signal-noise} details the controlled system; Appendix Sections~\ref{app:datasets}--\ref{app:training-details} cover the application datasets and training details, respectively.}

\paragraph{Predictive regimes.}
\label{sec:drive-history-boundary}
\begin{wrapfigure}{RH}{0.4\textwidth}
    \centering
        \vspace{-1cm}
    \setlength{\abovecaptionskip}{0pt}
    \makebox[\linewidth][c]{%
        \includegraphics[width=1.06\linewidth]{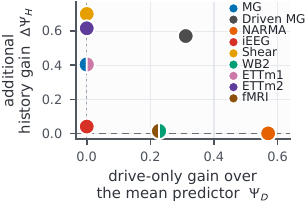}}
    \vspace{-11pt}
    \caption{\textbf{Predictive regimes.} Probes quantify observed-drive gain over mean predictor and additional history gain. MG+drive provides a controlled intervention on observed-drive gain.}
    \label{fig:drive-history-boundary-map}
    \vspace{-15pt}
\end{wrapfigure}

Fig.~\ref{fig:drive-history-boundary-map} maps predictive gains from simple held-out probes using observed future drive alone versus drive plus aligned history. Appendix Sections~\ref{app:regime-scores}--\ref{app:probe-validation} detail their construction, interpretation, and validation under controlled drive. The probes identify three predictive regimes and one controlled intervention:
\begin{itemize}[
leftmargin=1.15em,
labelsep=0.35em,
itemsep=0pt,
parsep=0pt,
topsep=2pt,
partopsep=0pt
]

\item \textbf{History-dominated, weak drive:}
MG, shear flow, ETTm1, and ETTm2.

\item
\textbf{Drive-dominated, weak history:} {NARMA, movie fMRI, and WeatherBench-2.}
\item
\textbf{Weak drive and weak history:}
{iEEG.}
\item
\textbf{{Joint history-and-drive intervention:}}
driven MG {(see Appendix~\ref{app:driven-mg-boundary}
for the driven-system configuration and intervention results).}

\end{itemize}
These groupings describe predictive structure recovered by the probes, not
intrinsic dataset properties, and predict the clearest Internal-DW benefit in
the history-dominated, weak-drive regime.

\paragraph{Evaluation and model selection.}
Estimator variants and checkpoints are selected by validation loss and fixed
before test evaluation. For a model trained with horizon \(K\), each held-out
origin is rolled out through \(H_{\rm eval}=\lceil1.5K\rceil\). For test units
\(\mathcal U\) and origins \(\mathcal O_u\), the primary metric is
\vspace{-5mm}

\begin{equation}
 \mathrm{RelL2}_{1:H_{\rm eval}}
 =\frac{1}{|\mathcal U|}\sum_{u\in\mathcal U}
   \frac{1}{|\mathcal O_u|}\sum_{o\in\mathcal O_u}
   \frac{1}{H_{\rm eval}}\sum_{h=1}^{H_{\rm eval}}
   \frac{\lVert\hat x_{u,o}(h)-x^\star_{u,o}(h)\rVert_2}
        {\max\{\lVert x^\star_{u,o}(h)\rVert_2,10^{-8}\}}.
 \label{eq:dense-multistart-rel-l2}
\end{equation}

Appendix~\ref{app:evaluation-protocol} gives the origin-selection details. 

\paragraph{Baselines and controls.} We compare Internal-DW with full BPTT, global gradient clipping (Clip), and forward-Jacobian regularization (JReg). On the four history-dominated, weak-drive testbeds, we additionally compare with TBPTT and a tied static gain
\(\alpha=m=c\) (Static), analogous to fixed backward
damping~\citep{ingraham2019learning}. The TBPTT segment length, Static coefficient \(c\), and choice between DW-Generic and DW-Prior are selected by validation loss and then fixed for test evaluation. Within each dataset, all methods otherwise use the same experimental settings.
Appendix Table~\ref{tab:identification-summary} lists the selected sampler, and Appendix~\ref{app:impl-details} gives architectures, selection grids, and training details.

\section{Results}
\label{sec:results}
The experiments first test whether delayed gradients become less reliable and
whether Internal-DW responds appropriately, then determine when this local
correction improves forecasting across predictive regimes and training
horizons.

\subsection{Mechanism and Estimator Response}
\label{sec:result-mechanism}
\paragraph{Known-SNR mechanism closure.}
In the known-SNR system, independently resampling future innovations from a fixed $x_t$ gives the exact conditional-mean gradient and therefore direct measurements of gradient SNR and prefix risk. A decrease in prefix risk means that an added forecast step contributes more conditional-mean signal than innovation; an increase means the reverse. Appendix Sections~\ref{app:known-snr-signal-noise}--\ref{app:known-snr-prefix-risk} give the construction. Fig.~\ref{fig:known-snr-failure} follows the resulting chain from gradient amplification to forecasting performance.

\vspace{-3pt}
\begin{figure*}[h]
  \centering
  \setlength{\abovecaptionskip}{2pt}
  \includegraphics[width=\textwidth]{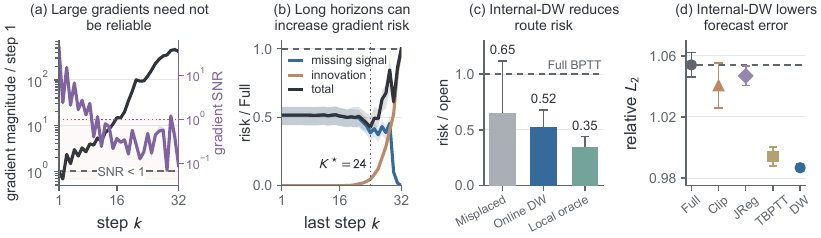}
    \vspace{-12pt}
  \caption{
  \textbf{Known-SNR mechanism closure.}
  (a) Per-horizon gradient magnitude grows while reliability falls;
  (b) Prefix gradient risk, decomposed into remaining predictable gradient signal and accumulated innovation noise;
  (c) Reliability weighting lowers {route risk}; and
  (d) Internal-DW gives the lowest forecast error. Appendix Sections~\ref{app:known-snr-signal-noise}--\ref{app:known-snr-gain-risk} give the complete protocol and values.}
  \label{fig:known-snr-failure}
\end{figure*}

Fig.~\ref{fig:known-snr-failure}a--b shows that later gradients grow while
their SNR falls. Beyond $K_{\rm grad}^\star=24$, accumulated innovation
outweighs the recovered signal and total prefix risk rises, showing that neither
gradient magnitude nor forecast distance reliably identifies useful credit.
Internal-DW reduces normalized {route risk} from $1.00$ for full BPTT
to $0.52$; shifting the same gains across forecast steps increases risk to
$0.65$, showing that their placement matters beyond generic attenuation
(Fig.~\ref{fig:known-snr-failure}c). {With access to the true signal moment and noise samples from the underlying process, the oracle selects better weights and further reduces risk to $0.35$, suggesting room to improve DW's gain estimation.} Internal-DW also achieves the lowest forecasting error among
the tested methods (Fig.~\ref{fig:known-snr-failure}d).

\paragraph{Application-data diagnostics.}
On application data, the true gradient SNR and oracle gains are unavailable, so we use two complementary held-out diagnostics. For each forecast step $k$, $A(k)$ measures the magnitude of that step's gradient relative to the first forecast step, while $U(k)$ measures whether its direction is useful on a disjoint held-out rollout. Positive $U(k)$ indicates that a small update along the negative step-$k$ training gradient would locally reduce held-out rollout loss, whereas $U(k)<0$ indicates that it would increase held-out loss. Appendix~\ref{app:heldout-gradient-utility} gives the exact definitions and evaluation protocol. Fig.~\ref{fig:application-data-diagnostics}a therefore asks whether gradients that become large at distant forecast steps also remain useful on unseen examples, while panel (b) asks whether the estimated gains close as independent noise increases.

\begin{figure*}[h]
    \centering
    \setlength{\abovecaptionskip}{2pt}
    \includegraphics[width=\textwidth]
    {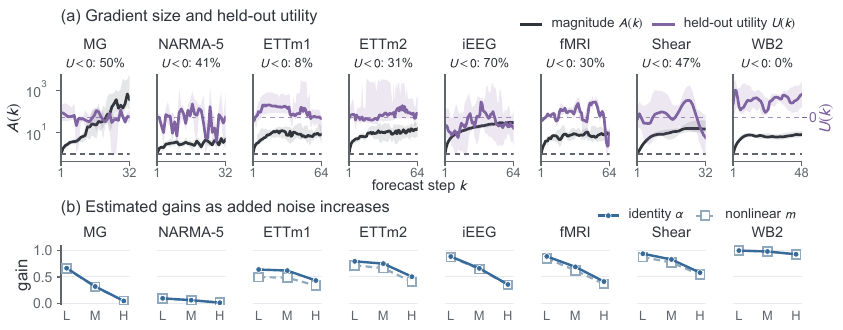}
    
\caption{\textbf{Application-data reliability and estimator response.} (a) Per-horizon relative gradient magnitude $A(k)$ and held-out utility $U(k)$. Solid curves show medians; shaded bands show the 10--90\% range for $A(k)$ and the interquartile range for $U(k)$. Percentages give the fraction of forecast steps with negative median $U(k)${, rounded to the nearest percent}.
(b) Mean identity and nonlinear gains across added-noise levels.
Within each dataset, L, M, and H correspond to residual-to-added-noise ratios
$4$, $1$, and $0.25$, respectively; smaller ratios indicate stronger added
noise. Both gains decrease as noise increases, showing the intended
directional response of the estimator.}
\vspace{-6pt}
    \label{fig:application-data-diagnostics}
\end{figure*}

Fig.~\ref{fig:application-data-diagnostics}a shows that gradient amplification and held-out utility can decouple on application data. Across most testbeds, per-horizon gradients grow even as their held-out utility fluctuates or becomes negative; ETTm1 and WeatherBench-2 provide complementary cases in which growing gradients remain mostly useful. This contrast shows that neither gradient magnitude nor forecast distance alone identifies a transferable delayed contribution. Fig.~\ref{fig:application-data-diagnostics}b supports the corresponding estimator-response claim: both route gains decrease on every testbed as controlled noise increases. The dataset-specific operating points remain informative. WeatherBench-2 stays nearly open, whereas NARMA is nearly closed even under weak perturbation, consistent with its selected sampler treating predictable drive-dependent residual variation as noise. Together, Fig.~\ref{fig:application-data-diagnostics} provides application-level support for two linked claims: large delayed gradients need not remain useful on held-out data, and Internal-DW closes its routes in response to increased noise. Because native-data gradient SNR and oracle gains remain unobserved, these diagnostics support the qualitative mechanism and estimator response rather than establishing exact gain optimality.

\subsection{Forecasting Effects Across Regimes and Horizons}
\label{sec:accuracy}
\begin{figure*}[!h]
  \centering
  \includegraphics[width=380pt]{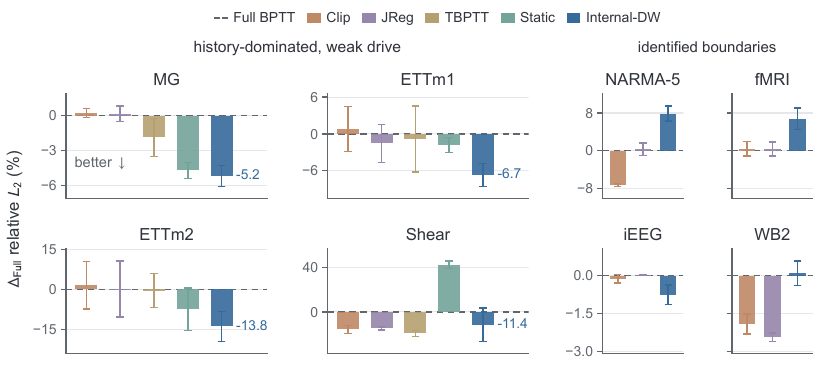}
  \setlength{\abovecaptionskip}{2pt}
\caption[Forecasting across predictive regimes.]{
  \textbf{Forecasting across predictive regimes.}
  Bars show the mean percentage change in relative $L_2$ from full BPTT,
  and error bars show the standard deviation across three matched seeds;
  negative values indicate improvement. TBPTT and Static are additionally
  evaluated on the four history-dominated, weak-drive testbeds. The underlying
  relative $L_2$ values are reported in
  Appendix~\ref{app:quantitative-table}.
}
  \vspace{-10pt}
  \label{fig:assigned-estimator-performance}
\end{figure*}

\paragraph{Forecasting across predictive regimes.}
Forecasting outcomes follow the applicability diagnostic in
Fig.~\ref{fig:drive-history-boundary-map}. Fig.~\ref{fig:assigned-estimator-performance} shows that Internal-DW
improves all four history-dominated, weak-drive testbeds over full BPTT,
reducing relative $L_2$ by $5.17\%$--$13.76\%$. It outperforms
validation-selected TBPTT on Mackey--Glass, ETTm1, and ETTm2, and the Static
control on ETTm1, ETTm2, and shear flow. DW outperforms Clip and JReg on MG, ETTm1, and ETTm2, while their performance is similar on shear flow. Thus neither truncation nor a single
global damping coefficient consistently reproduces the gains.

At the identified boundaries, Internal-DW provides no consistent benefit.
NARMA's near-closed gains coincide with $7.87\%$ worse forecasting,
consistent with the sampler assigning predictable drive-dependent variation
to noise. iEEG and WeatherBench-2 remain comparable to full BPTT, while
movie fMRI worsens. A controlled Mackey--Glass intervention strengthens this
interpretation: adding strong observed drive while retaining substantial
history value moves the system across the regime map and reverses
Internal-DW from a $5.17\%$ improvement to a $7.71\%$ degradation relative
to full BPTT (Appendix~\ref{app:driven-mg-boundary}). {Detailed runtime measurements are reported in Appendix}%
~\ref{app:computation-time}.
\vspace{-10pt}
\paragraph{Fitted optimal training-horizon range.}
Longer training horizons add both conditional-mean signal and innovation,
creating a marginal trade-off that reliability weighting can reshape but not
eliminate. We vary the training horizon $K$ while evaluating all checkpoints
at a fixed forecast horizon $H$ across MG, ETTm1, ETTm2, and Shear.
Fig.~\ref{fig:fixed-horizon-k-sweep} and the seed-specific curves in
Appendix~\ref{app:per-seed-curve} show that $K^\star_{\rm DW}\ge K^\star_{\rm Full}$ for every matched seed and in the seed-averaged fits, with a lower fitted minimum error for Internal-DW on all four datasets. This is consistent with the reliability trade-off in Theorem~\ref{thm:optimal-routing}: suppressing transported innovation while preserving predictable delayed credit can increase the marginal value of extending the rollout. Internal-DW therefore extends or preserves the useful training-horizon range rather than improving every horizon pointwise.
\begin{figure*}[h!]
  \centering
  \setlength{\abovecaptionskip}{0pt}

  \includegraphics[width=\textwidth]{
    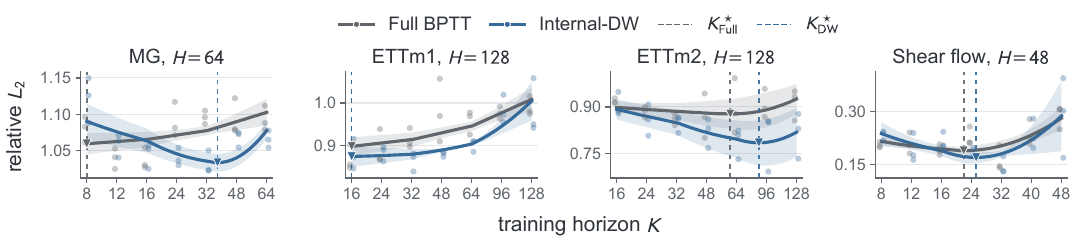
  }
  \vspace{-10pt}
\caption[Fitted optimal training horizons.]{
  \textbf{Fitted optimal training horizons.}
  Points show relative $L_2$ for checkpoints trained at different $K$ and
  evaluated at the fixed horizon $H$ shown in each panel; curves are
  upward-opening quadratic fits across matched seeds. Internal-DW shifts the
  fitted minimum toward larger $K$ in all four testbeds. The corresponding
  seed-specific fitted curves are shown in
  Appendix~\ref{app:per-seed-curve}.
}

  \label{fig:fixed-horizon-k-sweep}
  \vspace{-20pt}
\end{figure*}

\section{Discussion and Conclusion}
\label{sec:discussion}
\label{sec:conclusion}
\label{sec:limitations}
\label{sec:future-work}

Internal-DW minimizes a local quadratic route risk rather than the observed
rollout loss, so it does not guarantee a better training trajectory. Its gains
are only as accurate as the selected noise sampler, and solving each horizon
and layer independently ignores interactions after their contributions are
combined. These limitations suggest two extensions. Samplers that model
innovation conditional on history, observed drive, and their interaction---or
architectures that separate these pathways---could broaden the favorable
regime. Richer conditional samplers and justified pooling across related
channels, horizons, or layers could also reduce model mismatch and
finite-sample error.

Long-rollout BPTT can amplify distant gradient contributions without making
them reliable. Internal-DW addresses this mismatch by reliability-weighting
residual backward routes while preserving the forward rollout and all horizon
losses. The noise sampler provides the identifying structure needed to
estimate latent route reliability, allowing domain-appropriate samplers to be
plugged into a common routing framework. The known-SNR experiment connects
gradient amplification, route-risk reduction, and forecasting improvement,
while the application results show benefits primarily in history-dominated,
weak-drive regimes. Internal-DW can also extend or preserve the useful
training-horizon range, although it need not improve every $K$ pointwise and
can be neutral or harmful when usable history is weak or the sampler is
mismatched to dominant driven information. Thus Internal-DW is not a universal
stabilizer; it shows that long-horizon supervision can be retained without
trusting every backward contribution equally.

\subsection*{AI use statement}
We used generative AI tools to help refine hypotheses and
diagnostics, assist with figure preparation and improve the organization and wording of the manuscript.  The
authors checked the resulting derivations, references, figures, and claims against the cited sources and experiments.  The
authors take responsibility for the final content of the paper and all
AI-assisted artifacts.

\subsection*{Ethics statement}
This work performs secondary analysis of existing human neurophysiological
data and includes no new participant recruitment or data collection.  The
OpenNeuro iEEG release contains publicly shared de-identified recordings for
which participants or their legal guardians consented to research use and
public sharing, under the approval reported by the originating
study~\citep{keles2024multimodal}.  The HCP movie-fMRI data were collected with
informed consent and institutional approval~\citep{vanessen2013hcp}; we used
the open-access release under its data-use terms, did not use restricted
identifying variables, and made no attempt to re-identify participants.  The
experiments study forecasting methodology and are not intended for clinical
decision making or subject-level interpretation.

\subsection*{Reproducibility statement}
Section~\ref{sec:method} specifies the backward operator, calibration target,
and estimator assumptions.  The appendices provide derivations, noise-sampler
constructions, implementation details, dataset preprocessing and splits,
training hyperparameters, evaluation protocols, and seed-level results.
Checkpoints retain the route gains, estimated moments, probe counts, and
sampler configuration needed to audit calibration. 

\bibliography{iclr2027_conference}
\bibliographystyle{iclr2027_conference}

\appendix
\addtocontents{toc}{\protect\setcounter{tocdepth}{2}}
\clearpage
\begingroup
  \renewcommand{\contentsname}{Appendix Contents}
  \setcounter{tocdepth}{2}
  \tableofcontents
\endgroup
\clearpage

\section{Extended Related Work}
\label{app:related}

\paragraph{Sensitivity in autonomous and driven rollouts.}
Dynamical-systems analyses distinguish asymptotic instability from realized finite-time amplification, which can depend strongly on the starting state, forcing, observation error, and model error \citep{lorenz1996predictability,palmer2000predicting}. The same Jacobian products govern BPTT sensitivity, but the magnitude of a transported gradient also depends on the residual and its alignment with amplified directions~\citep{pascanu2013difficulty,mikhaeil2022difficulty}. Related work further shows that forecast errors can concentrate in unstable subspaces and that driven recurrent systems can exhibit generalized
synchronization with their inputs~\citep{trevisan2011aus,lu2018generalizedsync}. These results characterize the leverage of delayed information, not the reliability of the resulting gradient.

\paragraph{Credit assignment in reinforcement learning and differentiable
world models.}
Long-term credit assignment also arises in reinforcement learning, where much
of the literature asks how delayed returns should be attributed to earlier
states or actions. PIPPS uses total propagation to combine gradient estimators across
pathwise derivative depths, assigning greater weight to lower-variance
estimators~\citep{parmas2018pipps}. RUDDER redistributes delayed rewards through return
decomposition, while synthetic-return methods associate past states directly
with distant future rewards
~\citep{arjona2019rudder,raposo2021synthetic}.
Recent LLM-agent training methods similarly decompose long interaction
trajectories into training transitions or construct turn-level rewards
~\citep{luo2025agentlightning,tao2026trace}.
Differentiable world models provide a closer computational connection.
Stochastic value-gradient methods differentiate policy objectives through
learned dynamics, while Dreamer backpropagates value gradients through latent
imagined trajectories
~\citep{heess2015learning,hafner2020dream}.
Ma et al.\ further redesign the world-model architecture to provide more direct
long-range policy-gradient paths
~\citep{ma2024transformer}.
Collectively, these works alter how delayed returns are assigned or how policy
gradients are propagated through model rollouts. Internal-DW instead retains
the observed per-horizon targets, forward rollout, and all horizon losses, and
estimates the reliability of transported identity and nonlinear route VJPs.

\paragraph{Alternative recurrent-gradient estimators.}
TBPTT shortens the backward window, whereas ARTBP randomizes the cutoff and reweights surviving paths to obtain an unbiased estimator of full
BPTT at the cost of additional variance \citep{tallec2017unbiasing}. RTRL and its scalable variants propagate recurrent gradients online
\citep{williams1989learning,tallec2017unbiased,mujika2018approximating, benzing2019optimal}, while activation checkpointing trades recomputation for memory without changing the full BPTT gradient
\citep{griewank1992achieving,chen2016training,gruslys2016memory}. These methods change how a long-horizon gradient is computed or approximated; they do not estimate the reliability of individual route
contributions after transport.

\paragraph{Related methods that weight or modify backward signals.}
Kalman-inspired neural models learn filtering updates for latent-state estimation: Recurrent Kalman Networks maintain structured latent uncertainty, and KalmanNet learns a Kalman gain under partially known dynamics \citep{becker2019recurrent,revach2022kalmannet}.
Classical adaptive filtering also shows that process and observation covariances cannot generally be recovered from one total covariance without additional dynamical or rank assumptions \citep{mehra1970identification,belanger1974noise,
odelson2006autocovariance}. This identification principle motivates the explicit sampler contract
in Appendix~\ref{app:noise-nonidentifiability}. Synthetic gradients and related local methods instead learn or approximate module-level backward messages \citep{jaderberg2017decoupled,weber2019credit}, while Highway-BP uses residual shortcuts to approximate ordinary backpropagation more efficiently \citep{fagnou2025accelerated}.
Internal-DW retains the rollout VJPs and applies covariance-derived reliability weights to routes within each residual merge.

\section{Why Delayed Gradients Accumulate}
\label{app:theory-mech}

\label{sec:shared-gradient-accumulation}

The sensitivity in Equation~\ref{eq:horizon-gradient-sum} obeys the exact
forced recurrence
\begin{equation}
  D_{t,k+1}=J_{t+k}D_{t,k}+G_{t+k},
  \qquad D_{t,0}=0.
  \label{eq:forced-sensitivity-recurrence}
\end{equation}
Thus contraction of an old path does not remove the new parameter contribution
injected at the next rollout step.  To isolate the resulting long-horizon
regimes, consider a stationary local linearization and one parameter direction,
\begin{equation}
  z_{k+1}=Jz_k+\mu+\epsilon_k,
  \qquad
  \E[\epsilon_k]=0,\qquad
  \E[\epsilon_k\epsilon_j^\top]=\mathbf{1}\{k=j\}Q .
  \label{eq:stationary-forced-direction}
\end{equation}
Here $\mu$ is the coherent part of the local parameter injection and
$\epsilon_k$ is its temporally uncorrelated component.  Iteration gives
\begin{equation}
  \E[z_k]=\sum_{h=0}^{k-1}J^h\mu,
  \qquad
  \operatorname{Cov}(z_k)
  =\sum_{h=0}^{k-1}J^hQ(J^h)^\top .
  \label{eq:forced-direction-moments}
\end{equation}
The second expression is a finite controllability Gramian.  If
$\rho(J)<1$, both sums converge and
the root-mean-square (RMS) sensitivity approaches a
finite plateau.  A
unit-eigenvalue mode excited by $\mu$ accumulates in the mean, while any
semisimple unit-modulus mode excited by $Q$ accumulates in the variance.  If
$\rho(J)>1$ and an unstable mode is excited, the corresponding moment grows
exponentially.  A nontrivial Jordan block on the unit circle can produce still
faster polynomial growth.  These statements are matrix results; no scalar
reduction is required.

The scalar case makes the limits explicit.  For a nonnegative gain $a$, coherent injection
$\mu$, and innovation variance $\sigma^2$,
\begin{equation}
  \E[z_k^2]
  =\mu^2\!\left(\sum_{h=0}^{k-1}a^h\right)^2
   +\sigma^2\sum_{h=0}^{k-1}a^{2h},
  \quad
  \sum_{h=0}^{k-1}a^h=\frac{1-a^k}{1-a},
  \quad
  \sum_{h=0}^{k-1}a^{2h}=\frac{1-a^{2k}}{1-a^2},
  \label{eq:scalar-forced-rms-regimes}
\end{equation}
with the continuous limits $k$ at $a=1$.  Hence $|a|<1$ gives saturation,
$a=1$ gives linear accumulation in the coherent component and $\sqrt{k}$ RMS
growth in the uncorrelated component, and $|a|>1$ gives exponential growth.
Equal deviations around unity are also asymmetric at long horizons: for
$0<\delta<1$ and $k>1$,
\begin{equation}
  (1+\delta)^k-1
  >1-(1-\delta)^k,
  \label{eq:expansion-contraction-asymmetry}
\end{equation}
so a comparably expansive mode eventually dominates a comparably contractive
one.  This comparison does not assert that learning makes the two deviations
equally likely.

These regimes describe the shared-parameter sensitivity.  They carry
over to the gradient in Equation~\ref{eq:horizon-gradient-sum} when
the forecast residual excites the corresponding transported modes.
Time-varying Jacobians, correlated injections, and cancellation can change the
rate, so these are sufficient regimes rather than a claim that every nonlinear
autoregressive model produces large gradients.

\section{Internal-DW: Theory and Implementation}
\label{app:internal-dw-implementation}

This section first proves the optimality result underlying Internal-DW and
then gives the complete training and calibration procedure.
Algorithm~\ref{alg:dual-wiener} summarizes how ordinary training steps
alternate with fully open calibration probes that update the route moments
and gains. The remaining subsections specify the routing placement, probe
construction, constrained solve, and calibration cost.

\subsection{Proof of Theorem~\ref{thm:optimal-routing}}
\label{app:optimal-routing-proof}

\begin{proof}
Recall that $d=s+n$, where $s=\E[d\mid\mathcal C]$ and $n=d-s$.
Hence $\E[n\mid\mathcal C]=0$, and therefore
\[
\E[s^\top n]
=
\E\!\left[s^\top\E[n\mid\mathcal C]\right]
=
0.
\]
For any $w\in\mathbb R^2$,
\[
dw-s\mathbf 1
=
s(w-\mathbf 1)+nw.
\]
Using this orthogonality,
\begin{align*}
\mathcal R(w)
&=
\E\!\left[\left\|dw-s\mathbf 1\right\|_2^2\right] \\
&=
(w-\mathbf 1)^\top \E[s^\top s](w-\mathbf 1)
+
w^\top \E[n^\top n]w \\
&=
(w-\mathbf 1)^\top P(w-\mathbf 1)
+
w^\top Rw,
\end{align*}
which gives Equation~\ref{eq:dual-wiener-risk}.

Expanding and using $T=P+R$ gives
\[
\mathcal R(w)
=
w^\top T w
-
2w^\top P\mathbf 1
+
\mathbf 1^\top P\mathbf 1.
\]
Since $T\succ0$, $\mathcal R$ is strictly convex, and its unique
unconstrained minimizer satisfies
\[
\nabla_w\mathcal R(w)
=
2Tw-2P\mathbf 1
=
0.
\]
Thus
\[
w_{\rm unc}=T^{-1}P\mathbf 1.
\]

Using $Tw_{\rm unc}=P\mathbf 1$, completing the square yields
\[
\mathcal R(w)
=
\mathcal R(w_{\rm unc})
+
(w-w_{\rm unc})^\top T(w-w_{\rm unc}).
\]
Therefore
\[
w^\star
=
\arg\min_{w\in[0,1]^2}
(w-w_{\rm unc})^\top T(w-w_{\rm unc}),
\]
so $w^\star$ is the $T$-metric projection of $w_{\rm unc}$ onto
$[0,1]^2$.

Finally, $\mathbf 1\in[0,1]^2$, so optimality of $w^\star$ gives
\[
\mathcal R(w^\star)\le\mathcal R(\mathbf 1).
\]
From the risk decomposition,
\[
\mathcal R(\mathbf 1)=\mathbf 1^\top R\mathbf 1,
\]
and hence
\[
\mathcal R(w^\star)\le\mathbf 1^\top R\mathbf 1.
\]
Because $\mathcal R$ is strictly convex, the constrained minimizer is unique;
consequently, the inequality is strict whenever $\mathbf 1$ is not itself
optimal.
\end{proof}

For a single route, $T=P+R$ is scalar, so the unconstrained solution reduces
immediately to $w_{\rm unc}=P/(P+R)$.

\begin{algorithm}[h]
\caption{Internal-DW training and calibration}
\label{alg:dual-wiener}
\small
\begin{algorithmic}[1]
\REQUIRE Forecaster $f_\theta$, sampler $\mathcal Q_d$, and stored
$\{w_{k,\ell},\widehat T_{k,\ell},\widehat R_{k,\ell}\}$
\STATE Use $w_{k,\ell}=(1,1)^\top$ during the fully open warm-up
\FOR{training minibatch $b$}
    \STATE Roll out $K$ steps and form the task loss
    \IF{$b$ is a calibration minibatch after warm-up}
        \STATE Form $q_{\rm tot}$ and $q_{\rm noise}$ and run two fully open VJP probes
        \STATE Stage updates of $\widehat T$, $\widehat R$, and
        $\widehat P\leftarrow\Pi_{\rm PSD}(\widehat T-\widehat R)$
        \STATE Stage new gains by minimizing
        Equation~\ref{eq:dual-wiener-risk} over $w\in[0,1]^2$
    \ENDIF
    \STATE Backpropagate the task loss with the previously committed gains
    \STATE Update $\theta$ and commit any staged sampler statistics, moments, and gains
\ENDFOR
\end{algorithmic}
\end{algorithm}

\subsection{Where the Route Weights Are Applied}
\label{app:routing-placement}

At each forecast step and residual layer, two forward-identity autograd nodes
separately weight the identity and nonlinear backward messages, record their
unscaled VJPs, and leave the forward value unchanged. The recurrent-state path
uses the nonlinear coefficient. Routing is confined to internal residual
merges; the outer autoregressive update remains fully open because a single
gain there would combine paths represented separately by the two-route
estimator.
\subsection{How the Route Moments Are Calibrated}
\label{app:route-moment-calibration}

For calibration sample $i$ and forecast horizon $k$, the total-moment and
noise probes are
\begin{equation}
q_{\rm tot}
=
\frac{1}{2}\operatorname{mean}_{i,k}
\left\|\hat x_{i,k}-x^\star_{i,k}\right\|_2^2,
\qquad
q_{\rm noise}
=
\operatorname{mean}_{i,k}
\hat x_{i,k}^{\top}\widetilde\epsilon_{i,k},
\qquad
\widetilde\epsilon_{i,1:K}\sim\mathcal Q_d.
\label{eq:gaussian-noise-probe}
\end{equation}
{The quadratic $q_{\rm tot}$ need not equal the task loss used for
parameter updates. Both probes are differentiated with all route gains set
to one solely to estimate route moments and calibrate the Wiener gains;
their gradients are not accumulated into the task gradients used to update
model parameters.} Because route moments depend on the probe gradient, transfer across different probe and task losses is an estimator approximation rather than a theoretical guarantee. For the
residual block
\[
y_{k,\ell}
=
x_{k,\ell}+F_\ell(x_{k,\ell}),
\]
the recorded two-route message for a scalar probe $q$ is
\begin{equation}
d_{k,\ell}(q)
=
\left[
\underbrace{\nabla_{y_{k,\ell}}q}_{d^I_{k,\ell}(q)}
\quad
\underbrace{
J_{F_\ell}(x_{k,\ell})^\top\nabla_{y_{k,\ell}}q
}_{d^F_{k,\ell}(q)}
\right].
\label{eq:recorded-route-messages}
\end{equation}
Accordingly,
\[
d^{\rm tot}_{k,\ell}=d_{k,\ell}(q_{\rm tot}),
\qquad
d^{\rm noise}_{k,\ell}=d_{k,\ell}(q_{\rm noise}).
\]
These messages update the EMAs in
Equation~\ref{eq:plugin-route-moments}. If the block input has dimension
$D_\ell$, the implementation accumulates
\[
\frac{1}{D_\ell}
(d^{\rm tot}_{k,\ell})^\top d^{\rm tot}_{k,\ell}
\qquad\text{and}\qquad
\frac{1}{D_\ell}
(d^{\rm noise}_{k,\ell})^\top d^{\rm noise}_{k,\ell}.
\]
The common factor $1/D_\ell$ scales $\widehat T$, $\widehat R$, and
$\widehat P$ equally and therefore does not change the Wiener solution.

The probes estimate local route moments under the selected sampler; the
resulting Wiener objective is a local MMSE criterion, not a claim of global
optimality for the nonlinear training objective.

\subsection{How the Two Route Weights Are Solved}
\label{app:route-weight-solve}

Each symmetric $2\times2$ route matrix is stored as three scalars
$(II,IJ,JJ)$. Before solving, $\widehat T$ and $\widehat R$ are symmetrized
and projected onto the positive-semidefinite cone, after which
\[
  \widehat P
  =
  \Pi_{\rm PSD}(\widehat T-\widehat R).
\]
A numerical ridge of
$10^{-6}\operatorname{tr}(\widehat P+\widehat R)$ is added only to stabilize
the two-dimensional inverse.

The constrained solution is not obtained by clipping the unconstrained one,
because correlations between routes can make coordinatewise clipping
suboptimal. Instead, the implementation evaluates the feasible unconstrained
optimum, the optimum on each edge, and all four corners of $[0,1]^2$, and
selects the point with minimum quadratic risk.

\subsection{When Calibration Runs and What It Costs}
\label{app:calibration-cost}

Both calibration probes temporarily open every route and use
\texttt{autograd.grad} without accumulating parameter gradients. The task
backward uses the previously committed gains; newly estimated moments and
gains are committed only afterward. This prevents probe noise or an in-place
gain change from entering the current task update.

A calibration minibatch adds two VJP traversals. Soft gains do not prune the
computational graph, so the current implementation adds periodic computation
but does not reduce activation memory. Checkpoints store the gains, moments,
probe counts, and sampler configuration.

\section{How the Noise Samplers Are Constructed}
\label{app:noise-sampler-constructions}

Throughout this section, \(x_{i,t}\in\mathbb R^D\) denotes observed sequence
\(i\) at time \(t\), \(r\) is a rollout origin, and \(k\in\{1,\ldots,K\}\)
is a forecast horizon.  Sampler \(d\) supplies a prediction
\(\hat x^{(d)}_{i,r,k}\) of the common target \(x_{i,r+k}\), forms
\(e^{(d)}_{i,r,k}=x_{i,r+k}-\hat x^{(d)}_{i,r,k}\), and returns a
zero-mean trajectory
\(\widetilde\epsilon^{(d)}_{i,r,1:K}\sim\mathcal Q_d\).
Thus \(d\) only labels the sampler, while \(\mathcal Q_d\) is the distribution
of its final noise trajectories.
The same draw enters \(q_{\rm noise}\) in
\Eqref{eq:gaussian-noise-probe}; thus every construction below has the common
pipeline \(x\rightarrow\hat x^{(d)}\rightarrow
e^{(d)}\rightarrow\widetilde\epsilon^{(d)}\).

\begin{table}[H]
\centering
\caption{\textbf{Noise samplers used in the experiments.}  The structural
assumption changes only the noise sampler; route-moment estimation and the
Wiener solve are shared.}
\label{tab:identification-summary}
\scriptsize
\renewcommand{\arraystretch}{1.05}
\setlength{\tabcolsep}{3.2pt}
\begin{tabular}{
  >{\raggedright\arraybackslash}p{0.20\textwidth}
  >{\raggedright\arraybackslash}p{0.35\textwidth}
  >{\raggedright\arraybackslash}p{0.35\textwidth}}
\toprule
\textbf{Data} & \textbf{Assumed structure} &
\textbf{Sampler \(\mathcal Q_d\) and retained dependence} \\
\midrule
\multicolumn{3}{l}{\textit{DW-Generic}} \\
Mackey--Glass, NARMA, ETTm1, ETTm2 &
Centered residuals &
{Lagged residual template} (\Eqref{eq:generic-noise-sampler-appendix});
coordinates and horizons \\
\midrule
\multicolumn{3}{l}{\textit{DW-Prior}} \\
iEEG theta &
Long-range temporal dependence~\citep{linkenkaer2001long};
cross-electrode factor model (ours) &
Cross-fitted factor model; electrodes and horizons \\
\addlinespace[1pt]
HCP movie fMRI &
Shared stimulus response~\citep{chen2015reduced,nastase2019measuring};
AR residual model (ours) &
Subject cross-fit + AR model; regions and horizons \\
\addlinespace[1pt]
Shear flow, WeatherBench-2 &
Approximate translation stationarity, motivated by stationary-flow spectral
analysis~\citep{adrian1988stochastic,towne2018spectral} &
Lagged 2-D spectrum; spatial modes, horizons independent \\
\bottomrule
\end{tabular}
\end{table}

\begin{algorithm}[H]
\caption{Constructing a noise-sampler module}
\label{alg:prior-innovation}
\begin{algorithmic}[1]
\REQUIRE Training data, module type, and any predeclared domain structure

\IF{DW-Generic}
\STATE Center the last committed residual template using the committed EMA mean
\STATE Define $\mathcal Q_d$ using a trajectory-wide random sign
\STATE After each training minibatch's task backward, replace the residual
template and update the EMA mean

\ELSIF{template-based DW-Prior}
\STATE Split subjects or separated time blocks into folds
\FOR{fold $f$}
\STATE Fit the sampler predictor without fold $f$
\STATE Store
$e^{(d)}_{i,r,k}
=x_{i,r+k}-\hat x^{(d,-f)}_{i,r,k}$
for held-out origins
\ENDFOR
\STATE Define $\mathcal Q_d$ as a whole-trajectory held-out-residual bootstrap

\ELSIF{spatial DW-Prior}
\STATE From the previous minibatch, update per-horizon/channel residual
means, variances, and 2-D Fourier powers
\STATE Define $\mathcal Q_d$ as the corresponding random-phase Gaussian sampler
\ENDIF

\RETURN $\mathcal Q_d$, which generates
$\widetilde\epsilon^{(d)}_{i,r,1:K}$, its fitted state, and the declared
assumption
\end{algorithmic}
\end{algorithm}

\subsection{Generic Sampler: Resampling Forecast Residuals}
\label{app:generic-sampler}
\begingroup
The idea is to use recent forecast errors as a proxy for unpredictable
innovation. For each output coordinate and forecast horizon, we
center a stored residual by subtracting an exponential moving average of
past residuals across training minibatches. This retains deviations from
the estimated average error. We then either keep or reverse the sign of
the entire centered trajectory, with equal probability. The resulting
noise samples have zero mean, while their magnitudes and pairwise products
across coordinates and horizons remain unchanged. Using one sign for the
whole trajectory preserves these joint error patterns. Propagating the
samples backward lets us estimate how such output variation affects the
internal gradients.

The implementation stores one residual vector per
forecast horizon, forming a single trajectory template. For state dimension \(D\) and maximum
horizon \(H_{\max}\), this template has \(H_{\max}D\) entries.
During training minibatch \(n\), the first residual sample observed at each
horizon is detached and staged as its replacement; additional samples
contribute to the residual moments but do not replace that staged sample.
Shuffled training batches and randomly sampled rollout origins provide
variation in the selected trajectory across minibatches.

Here \(x_{i,t}\in\mathbb R^D\) denotes observed sequence \(i\) at time
\(t\), \(r\) is a rollout origin, and \(k\in\{1,\ldots,K\}\) is a
forecast horizon in a \(K\)-step training rollout. The index \(n\) denotes
the training minibatch, and \({\rm G}\) labels the generic sampler.
The detached prediction \(\hat x^{({\rm G},n)}_{i,r,k}\) targets
\(x_{i,r+k}\), giving the residual
\[
  e^{({\rm G},n)}_{i,r,k}
  =x_{i,r+k}-\hat x^{({\rm G},n)}_{i,r,k}.
\]

Let \(\mathcal B_n\) list the sequence--origin pairs observed in minibatch
\(n\), with repeated observations counted separately, and let
\((I_n,R_n)\) be its first pair. The stored template
\(\check e^{({\rm G})}_{n,k}\) takes this pair's residual, while the
running mean \(\mu^{({\rm G})}_{n,k}\) uses all residuals in the minibatch:
\[
  \begin{aligned}
  \check e^{({\rm G})}_{n,k}
    &=e^{({\rm G},n)}_{I_n,R_n,k},\\[3pt]
  \mu^{({\rm G})}_{n,k}
    &=\beta\mu^{({\rm G})}_{n-1,k}
      +\frac{1-\beta}{|\mathcal B_n|}
        \sum_{(j,s)\in\mathcal B_n}e^{({\rm G},n)}_{j,s,k}.
  \end{aligned}
\]
Here \(|\mathcal B_n|\) is the number of observed pairs, and
\(\beta=0.99\) is the EMA coefficient. Each horizon's mean is initialized
from its first observed minibatch mean. After each training minibatch's
task backward, the staged template overwrites the previous template and
the EMA moments are committed. This update also occurs on minibatches
without a calibration probe and when gradients are accumulated across
several minibatches before an optimizer step. One additional detached
template is held temporarily while the update is staged.

To generate noise during minibatch \(n\), the probe centers the previously
committed template using the previous EMA mean and multiplies it by a
random sign \(\xi_{i,r}\), drawn independently for each probe sample with
equal probability of \(+1\) and \(-1\):
\begin{equation}
  \widetilde\epsilon^{({\rm G})}_{i,r,k}
    =\xi_{i,r}\bigl(\check e^{({\rm G})}_{n-1,k}
                    -\mu^{({\rm G})}_{n-1,k}\bigr).
  \label{eq:generic-noise-sampler-appendix}
\end{equation}
The same sign is shared across output coordinates and horizons.
Collecting the samples over \(k=1,\ldots,K\) gives
the noise trajectory
\(\widetilde\epsilon^{({\rm G})}_{i,r,1:K}\sim\mathcal Q_{\rm G}\),
where \(\mathcal Q_{\rm G}\) denotes the generic sampler's noise distribution.
The committed template is shared by the probe samples; coordinates and
horizons are not resampled separately.
Noise probes at a horizon begin only after its template is initialized.
The identifying approximation remains that centered forecast residuals
represent unpredictable innovation: any predictable error remaining
in those residuals is counted as noise.
\endgroup

\subsection{Prior Samplers: Predicting Reproducible Structure First}
\label{app:crossfit-samplers}
The two neural samplers first predict structure that should repeat across
separated time blocks or subjects, then treat the held-out prediction residual
as noise.  Cross-fitting ensures that each residual comes from a predictor
that was not trained on that held-out block or subject.
For a sequence--origin pair \((i,r)\) in fold \(f\), sampler \(d\) predicts
the same targets using a model fit without that fold.  Let
\(\mathcal B_d\) be the resulting bank of cross-fitted residual trajectories.
Then
\begin{equation}
  \begin{aligned}
    e^{(d)}_{i,r,k}
      &=x_{i,r+k}-\hat x^{(d,-f)}_{i,r,k},
    &
    \bar e^{(d)}_{k}
      &=\frac{1}{|\mathcal B_d|}
        \sum_{(j,s)\in\mathcal B_d}e^{(d)}_{j,s,k},\\[2pt]
    \widetilde\epsilon^{(d)}_{i,r,1:K}
      &=\xi_{i,r}
        \bigl(e^{(d)}_{J,S,1:K}-\bar e^{(d)}_{1:K}\bigr)
  \end{aligned}
  \label{eq:prior-template-bootstrap}
\end{equation}
\((J,S)\) is sampled from \(\mathcal B_d\), and the independent sign
\(\xi_{i,r}\) is \(+1\) or \(-1\) with equal probability.  The same sampled
trajectory and sign are used at every horizon.  This preserves
the empirical coordinate and cross-horizon covariance while making
\(\widetilde\epsilon^{(d)}\) zero mean.  Each reported bank contains at most
1024 training-only cross-fitted trajectories sampled without replacement.

\subsubsection{iEEG: Predicting Shared Temporal Factors}
\label{app:ieeg-sampler}
The iEEG sampler assumes that reproducible activity is concentrated in a small
set of cross-electrode factors with predictable temporal evolution.  It
forecasts those factors from past activity and treats the remaining held-out
variation as noise.
Write each observed theta-envelope sequence as \(x_{i,t}\in\mathbb R^D\).
The training sequence--time pairs are divided into contiguous held-out blocks
\(\mathcal H_f\).  For fold \(f\), the fit set \(\mathcal I_f\) excludes
\(\mathcal H_f\) and a purge gap on both sides, so no lag used by the
predictor crosses the held-out block.
From \(\mathcal I_f\), define
\begin{equation}
  \mu_f=\frac1{|\mathcal I_f|}
  \sum_{(i,t)\in\mathcal I_f}x_{i,t},
  \qquad
  C_f=\frac1{|\mathcal I_f|-1}
  \sum_{(i,t)\in\mathcal I_f}
  (x_{i,t}-\mu_f)(x_{i,t}-\mu_f)^\top .
  \label{eq:ieeg-factor-covariance}
\end{equation}
Let \(L_{f,\rho}\in\mathbb R^{D\times \rho}\) contain the top \(\rho\)
orthonormal
eigenvectors of \(C_f\), and let
\(a_{i,t}^{(f)}=L_{f,\rho}^{\top}(x_{i,t}-\mu_f)\) be the retained factor
scores.  Each
factor is fit independently with a ridge-regularized AR(\(p\)) model:
\begin{equation}
  \widehat\phi_{f,j}^{(p)}
  =\arg\min_{\phi\in\mathbb R^p}
  \sum_{(i,t)\in\mathcal I_f^{(p)}}
  \left(a_{i,t,j}^{(f)}-\sum_{\ell=1}^{p}\phi_{\ell}
  a_{i,t-\ell,j}^{(f)}\right)^2
  +\lambda s_{f,j}\|\phi\|_2^2,
  \label{eq:ieeg-factor-ar}
\end{equation}
where \(\mathcal I_f^{(p)}\) contains only targets whose complete lag window is
inside one fit segment, \(s_{f,j}\) is the mean diagonal entry of that factor's
lag Gram matrix, and \(\lambda=10^{-3}\).  If the fitted scalar AR has root
radius above \(0.995\), its coefficient vector is multiplied by the largest
\(\kappa\in[0,1]\) whose companion matrix has spectral radius at most \(0.995\).

For a held-out rollout origin \((i,r)\in\mathcal H_f\), the observed factor
history initializes the recursion
\begin{equation}
  \widehat a_{i,r+k,j}^{(f)}
  =\sum_{\ell=1}^{p}\widehat\phi_{f,j,\ell}^{(p)}
  \widehat a_{i,r+k-\ell,j}^{(f)},
  \qquad k=1,\ldots,K,
  \label{eq:ieeg-factor-forecast}
\end{equation}
with \(\widehat a_{i,r+k,j}^{(f)}=a_{i,r+k,j}^{(f)}\) for \(k\le0\).
The component outside the retained factor subspace is not set to zero.  It is
held at its last observed value
\begin{equation}
  b_{i,r,f}^{\perp}
  =(I-L_{f,\rho}L_{f,\rho}^{\top})(x_{i,r}-\mu_f),
  \qquad
  \hat x^{({\rm iEEG})}_{i,r,k}
  =\mu_f+L_{f,\rho}\widehat a_{i,r+k}^{(f)}+b_{i,r,f}^{\perp}.
  \label{eq:ieeg-prior-predictor}
\end{equation}
The residual trajectory placed in the common bank is therefore
\(e^{({\rm iEEG})}_{i,r,k}
=x_{i,r+k}-\hat x^{({\rm iEEG})}_{i,r,k}\), which
\Eqref{eq:prior-template-bootstrap} converts into
\(\widetilde\epsilon^{({\rm iEEG})}_{i,r,1:K}\).
The pair \((\rho,p)\) minimizes the held-out MSE averaged equally over horizons
and coordinates, with valid origins pooled across the purged folds:
\begin{equation}
  (\rho^\star,p^\star)
  =\arg\min_{\rho\in\mathcal R,\,p\in\mathcal P}
  \frac{1}{N_{\rho,p}KD}
  \sum_f\sum_{(i,r)\in\mathcal H_f}\sum_{k=1}^{K}
  \|e_{i,r,k}^{({\rm iEEG};\rho,p)}\|_2^2 .
  \label{eq:ieeg-prior-selection}
\end{equation}
The reported artifact uses five contiguous folds, a 128-step purge gap,
\(\mathcal R=\{8,16,32,64,80\}\), and
\(\mathcal P=\{8,16,32,64\}\); training-only cross-fitting selects
\((\rho^\star,p^\star)=(80,64)\).  Long-range temporal dependence in neural
  oscillation envelopes motivates the memory component
  ~\citep{linkenkaer2001long}; the cross-electrode factor model is our identifying
  assumption.

\subsubsection{Movie fMRI: Predicting Shared and Subject-Specific Responses}
\label{app:fmri-sampler}
The movie-fMRI sampler uses different subjects as repeated responses to the
same stimulus.  It predicts the response shared across subjects together with
the temporally predictable part of each subject's deviation, and treats the
remaining held-out variation as noise.
Let \(x_{i,t}\in\mathbb R^D\) be the parcellated response of subject \(i\) at
time \(t\), aligned because all subjects view the same movie.  We first remove
each subject's static mean and use the centered \(x_{i,t}\) throughout the
remaining construction:
\begin{equation}
  x_{i,t}
  \leftarrow x_{i,t}-\frac1T\sum_{u=1}^{T}x_{i,u}.
  \label{eq:fmri-subject-centering}
\end{equation}
Subjects, rather than time points, are assigned to cross-fitting folds.  For a
held-out subject fold \(f\), the shared response and subject deviations fitted
from the remaining subjects \(\mathcal S_{-f}\) are
\begin{equation}
  M_t^{(-f)}=\frac1{|\mathcal S_{-f}|}
  \sum_{a\in\mathcal S_{-f}}x_{a,t},
  \qquad
  \Delta x_{a,t}^{(-f)}=x_{a,t}-M_t^{(-f)}.
  \label{eq:fmri-shared-deviation}
\end{equation}
For every parcel \(c\), a separate AR(\(p\)) model is fit jointly across the
fit subjects:
\begin{equation}
  \widehat\phi_{f,c}^{(p)}
  =\arg\min_{\phi\in\mathbb R^p}
  \sum_{a\in\mathcal S_{-f}}\sum_{t=p+1}^{T}
  \left(\Delta x_{a,t,c}^{(-f)}-
  \sum_{\ell=1}^{p}\phi_{\ell}\Delta x_{a,t-\ell,c}^{(-f)}\right)^2
  +\lambda s_{f,c}\|\phi\|_2^2,
  \label{eq:fmri-deviation-ar}
\end{equation}
with the same relative ridge \(\lambda=10^{-3}\), Gram-scale \(s_{f,c}\), and
root-radius cap \(0.995\) as above.  Order \(p=0\) sets the predicted deviation
to zero and therefore reduces to the shared response alone.  For held-out
subject \(i\in\mathcal S_f\) and rollout origin \(r\), the observed deviation
history initializes the recursive AR prediction
\(\widehat{\Delta x}_{i,r+k}^{(-f)}\), and
\begin{equation}
  \hat x^{({\rm fMRI})}_{i,r,k}
  =M_{r+k}^{(-f)}+\widehat{\Delta x}_{i,r+k}^{(-f)},
  \qquad
  e^{({\rm fMRI})}_{i,r,k}
  =x_{i,r+k}-\hat x^{({\rm fMRI})}_{i,r,k}.
  \label{eq:fmri-prior-predictor}
\end{equation}
We select
\begin{equation}
  p^\star=\arg\min_{p\in\{0,1,2,4,8\}}
  \frac{1}{N_pKD}
  \sum_f\sum_{i\in\mathcal S_f}\sum_r\sum_{k=1}^{K}
  \|e_{i,r,k}^{({\rm fMRI};p)}\|_2^2,
  \label{eq:fmri-prior-selection}
\end{equation}
using five subject folds and training data only; the reported artifact selects
\(p^\star=8\).  Equation~\ref{eq:prior-template-bootstrap} then returns
\(\widetilde\epsilon^{({\rm fMRI})}_{i,r,1:K}\).  Prior work supports the use of
cross-subject alignment to capture stimulus-locked responses
~\citep{chen2015reduced,nastase2019measuring}; the diagonal AR model for
  predictable subject deviation is our additional identifying assumption.

\subsection{Spatial Fields: Matching Residual Spatial Structure}
\label{app:spatial-sampler}
The spatial sampler treats a residual map as structured noise rather than as
independent pixels.  It estimates the residual scale and spatial frequency
pattern separately for each horizon and channel, then uses them to shape a
white Gaussian field.
The Shear and WeatherBench-2 modules construct a different sampler online.
Let \(\mathcal B_n\) contain the sequence--origin pairs in calibration
minibatch \(n\).  For \((i,r)\in\mathcal B_n\), horizon \(k\), channel \(c\),
and grid location \(u\in\Omega\), define the detached residual
\[
  e^{({\rm sp},n)}_{i,r,k,c}(u)
  =x_{i,r+k,c}(u)-\hat x^{({\rm sp},n)}_{i,r,k,c}(u),
  \qquad |\Omega|=HW .
\]
Its coordinatewise mean and second moment are updated after the minibatch by
\begin{align}
  \mu_{n,k,c}(u)
  &=\beta\mu_{n-1,k,c}(u)+(1-\beta)
    \frac1{|\mathcal B_n|}
    \sum_{(i,r)\in\mathcal B_n}e^{({\rm sp},n)}_{i,r,k,c}(u), \\
  M_{n,k,c}(u)
  &=\beta M_{n-1,k,c}(u)+(1-\beta)
    \frac1{|\mathcal B_n|}
    \sum_{(i,r)\in\mathcal B_n}e^{({\rm sp},n)}_{i,r,k,c}(u)^2,
  \qquad \beta=0.99,
  \label{eq:field-residual-moments}
\end{align}
with the first observed minibatch used for initialization.  Current residuals
are standardized using the \emph{previous} committed moments,
\begin{equation}
  z^{(n)}_{i,r,k,c}(u)=
  \frac{e^{({\rm sp},n)}_{i,r,k,c}(u)-\mu_{n-1,k,c}(u)}
  {\sqrt{v_{n-1,k,c}(u)+\delta_{n-1,k}}},
  \quad
  v_{n-1,k,c}(u)=[M_{n-1,k,c}(u)-\mu_{n-1,k,c}(u)^2]_+,
  \label{eq:field-standardized-residual}
\end{equation}
where \(\delta_{n-1,k}=10^{-8}\max\{10^{-20},
|\Omega|^{-1}C^{-1}\sum_{c,u}M_{n-1,k,c}(u)\}\) is only a numerical floor.
Let \(\mathcal F\) be the orthonormal two-dimensional real FFT and let
\(w(\omega)=1\) for the DC and, when present, Nyquist columns and
\(w(\omega)=2\) for the other stored horizontal frequencies.  The minibatch
power and its unit-spatial-variance normalization are
\begin{equation}
  \check S_{n,k,c}(\omega)
  =\frac1{|\mathcal B_n|}\sum_{(i,r)\in\mathcal B_n}
  |\mathcal F z^{(n)}_{i,r,k,c}(\omega)|^2,
  \qquad
  \mathcal N(S)(\omega)
  =\frac{S(\omega)}{|\Omega|^{-1}\sum_{\omega'}
  w(\omega')S(\omega')} .
  \label{eq:field-power-estimate}
\end{equation}
The stored color spectrum is then
\begin{equation}
  S_{n,k,c}
  =\beta S_{n-1,k,c}+(1-\beta)\mathcal N(\check S_{n,k,c}),
  \label{eq:field-spectrum-ema}
\end{equation}
again initialized by the first available normalized spectrum.  At the next
calibration probe, draw independent white real fields
\(W_{i,r,k,c}(u)\sim\mathcal N(0,1)\) and return
\begin{equation}
  \widetilde\epsilon^{({\rm sp})}_{i,r,k,c}
  =\sqrt{v_{n,k,c}+\delta_{n,k}}\;\odot\;
  \mathcal F^{-1}\!\left[
  \sqrt{S_{n,k,c}}\;\odot\;\mathcal F W_{i,r,k,c}
  \right].
  \label{eq:field-prior-sampler}
\end{equation}
Collecting these fields over \(k=1,\ldots,K\) gives the sampler output
\(\widetilde\epsilon^{({\rm sp})}_{i,r,1:K}\sim\mathcal Q_{\rm sp}\).
The draw is zero mean and preserves coordinatewise residual scale and
per-horizon, per-channel spatial power, but not cross-channel or cross-horizon
covariance. Sending the same field through both routes retains their induced
cross-route covariance. The identifying assumption is approximate translation
stationarity on the regular grid
~\citep{adrian1988stochastic,towne2018spectral}.
Using only previously committed moments isolates the probe graph and assumes
that parameters vary slowly relative to the EMA window.

\subsection{How Sampler Outputs Enter Internal-DW}
\label{app:sampler-interface}
All constructions above provide the same object: a \(K\)-step output-noise
trajectory.  Internal-DW passes that trajectory through the same fully open
backward graph, so only the assumed noise distribution changes across
samplers.
Equations~\ref{eq:generic-noise-sampler-appendix},
\ref{eq:prior-template-bootstrap}, and~\ref{eq:field-prior-sampler}
give the corresponding distribution \(\mathcal Q_d\).  For every module, a draw
\(\widetilde\epsilon^{(d)}_{i,r,1:K}\sim\mathcal Q_d\) is inserted into
\(q_{\rm noise}\) in \Eqref{eq:gaussian-noise-probe}; the resulting
identity/nonlinear VJPs determine \(\widehat R_{k,\ell}\) through
\Eqref{eq:plugin-route-moments}.  The route probe, PSD projection, and
constrained Wiener solve are shared.  Blocked out-of-fold prediction
can reject the declared sampler
assumption when its residual remains predictable, but it cannot prove that the
residual is physically irreducible noise.

\section{Why a Noise Sampler Is Needed and How Its Errors Affect the Gains}
\label{app:noise-sampler-error}

\subsection{Proof of Route-Law Non-Identifiability}
\label{app:noise-nonidentifiability}

Equation~\ref{eq:dual-wiener-risk} depends on the decomposition
$T=P+R$. Fully open route gradients identify the total moment $T$, but
not its signal and noise components separately.
Proposition~\ref{prop:innovation-nonidentifiable} shows that this ambiguity
is structural rather than a finite-sample defect of the estimator.

\begin{proposition}[The open route law does not identify the split]
\label{prop:innovation-nonidentifiable}
Let $T\succ0$ be the covariance of an observed two-route vector $z$. For any
two distinct matrices $R_1,R_2$ satisfying $0\preceq R_i\preceq T$, define
independent Gaussian pairs
\[
n_i\sim\mathcal N(0,R_i),\qquad
s_i\sim\mathcal N(0,T-R_i),\qquad
z_i=s_i+n_i.
\]
Then $z_1$ and $z_2$ have the same complete distribution
$\mathcal N(0,T)$, while their unconstrained Wiener gains
$w_i=T^{-1}(T-R_i)\mathbf 1$ differ whenever
$(R_1-R_2)\mathbf 1\ne0$. Hence neither the total covariance nor, in
general, the complete law of the open route vector identifies its signal and
innovation covariances without additional structure.
\end{proposition}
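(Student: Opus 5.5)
The argument is a direct construction in three steps: identify the law of each $z_i$, compute the Wiener gains, and note that the noise-moment map $R\mapsto(T-R)$ is not determined by $T$.

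\emph{Step 1: the two route laws coincide.} Fix $i\in\{1,2\}$. Because $0\preceq R_i\preceq T$, both $R_i$ and $T-R_i$ are positive semidefinite, so the Gaussian laws $\mathcal N(0,R_i)$ and $\mathcal N(0,T-R_i)$ exist, possibly degenerate. Since $s_i$ and $n_i$ are independent Gaussians, their sum $z_i=s_i+n_i$ is Gaussian with mean zero and covariance $(T-R_i)+R_i=T$. The cleanest way to see this is through characteristic functions:
\[
\E e^{\mathrm i u^\top z_i}=\exp\!\bigl(-\tfrac12u^\top(T-R_i)u\bigr)\exp\!\bigl(-\tfrac12u^\top R_iu\bigr)=\exp\!\bigl(-\tfrac12u^\top Tu\bigr).
\]
Hence $z_1\overset{d}{=}z_2\sim\mathcal N(0,T)$, so the complete law of the open route vector is identical in both models.

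\emph{Step 2: the two models satisfy the hypotheses of Theorem~\ref{thm:optimal-routing}.} Take the conditioning information $\mathcal C=\sigma(s_i)$. Then $\E[z_i\mid\mathcal C]=s_i$, using independence and $\E n_i=0$. So $s_i$ is exactly the predictable component and $n_i$ the innovation, with $P_i=T-R_i$ and noise moment $R_i$. Because $T\succ0$, the unconstrained optimum from Theorem~\ref{thm:optimal-routing} applies and gives
\[
w_i=T^{-1}P_i\mathbf 1=T^{-1}(T-R_i)\mathbf 1=\mathbf 1-T^{-1}R_i\mathbf 1.
\]

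\emph{Step 3: the gains differ.} From Step 2,
\[
w_1-w_2=T^{-1}(R_2-R_1)\mathbf 1.
\]
Since $T$ is invertible, this difference is nonzero exactly when $(R_1-R_2)\mathbf 1\neq0$. To conclude non-identifiability, note that any functional of the observed law, whether the total covariance or the complete distribution, takes the same value for both models, yet the target gain differs. No estimator based on the open route law can therefore recover $w$ in general.

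The "in general" qualifier should be justified by exhibiting at least one admissible pair with $(R_1-R_2)\mathbf 1\ne0$. One choice is $R_1=0$ and $R_2=T$, which gives $w_1=\mathbf 1$ and $w_2=0$. Another is $R_2=\tfrac12T$, for which $(R_1-R_2)\mathbf 1=-\tfrac12T\mathbf 1\neq0$ because $T\succ0$.

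The only mildly delicate point is Step 2: the gains must be computed with respect to a conditioning under which the constructed $s_i$ is actually the conditional mean. Choosing $\mathcal C=\sigma(s_i)$ settles this. Degenerate covariances are not a problem, since all characteristic-function computations remain valid for positive semidefinite matrices.
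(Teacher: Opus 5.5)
Your proposal is correct and follows the paper's proof essentially step for step: equal covariance plus Gaussianity gives the common law $\mathcal N(0,T)$, and $w_1-w_2=T^{-1}(R_2-R_1)\mathbf 1$ is nonzero by invertibility of $T$. Your additions are refinements the paper leaves implicit: the characteristic-function argument that covers degenerate $R_i$, the explicit choice $\mathcal C=\sigma(s_i)$ that ties the gains to Theorem~\ref{thm:optimal-routing}, and the concrete admissible example pair.
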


\begin{proof}[Proof of Proposition~\ref{prop:innovation-nonidentifiable}]
For each $i\in\{1,2\}$, independence gives
\[
  \operatorname{Cov}(z_i)
  =
  \operatorname{Cov}(s_i)+\operatorname{Cov}(n_i)
  =
  (T-R_i)+R_i
  =
  T.
\]
Because $s_i$ and $n_i$ are Gaussian, their sum therefore satisfies
$z_i\sim\mathcal N(0,T)$. Thus $z_1$ and $z_2$ have the same complete
distribution despite having different signal--noise decompositions.

For decomposition $i$, the signal moment is $P_i=T-R_i$, so the
unconstrained Wiener gain is
\[
  w_i
  =
  T^{-1}P_i\mathbf 1
  =
  T^{-1}(T-R_i)\mathbf 1.
\]
Consequently,
\[
  w_1-w_2
  =
  T^{-1}(R_2-R_1)\mathbf 1,
\]
which is nonzero whenever $(R_1-R_2)\mathbf 1\ne0$. Hence the same
open-route law can imply different Wiener gains.
\end{proof}

No amount of additional sampling from the open-route distribution resolves
this ambiguity, because the indistinguishable decompositions agree at the
population-law level. Classical innovation-covariance estimators resolve it
only after imposing state-space, observation, repeated-measurement, or
covariance-rank assumptions
\citep{mehra1970identification,belanger1974noise,
odelson2006autocovariance}.
Internal-DW makes the analogous domain assumption through $\mathcal Q_d$ and
transports sampled output innovations through the fully open VJPs to estimate
the required route-noise covariance.

\subsection{When a Fixed Sampler Recovers the Intended Noise}
\label{app:sampler-consistency}

For residual-based samplers, we define the population residual targeted by the
sampler and state conditions under which its empirical noise moment converges
to that target. For domain $d$, let $c_t^{(d)}$ contain the observations
available to the noise predictor, and let $\mathcal G_d$ be its predeclared
predictor class. The best predictor in this class and its residual are
\begin{equation}
 q_d^*\in\arg\min_{q\in\mathcal G_d}
 \E\!\left[\lVert X_{t+1}-q(c_t^{(d)})\rVert_2^2\right],
 \qquad
 \epsilon_{t+1}^{(d)}=X_{t+1}-q_d^*(c_t^{(d)}).
 \label{eq:domain-projection-innovation}
\end{equation}
If \(\mathcal G_d\) contains the conditional mean, this residual is the
conditional innovation.  Otherwise it also contains predictable structure
that the chosen predictor class cannot represent.  The guarantee below
concerns this explicitly defined residual; it does not claim that model error
is physical noise.

\begin{proposition}[Consistency under a fixed sampler model]
Fix a domain $d$ and basis $B_d$. Suppose that
(i) training units are stationary and ergodic, with cross-fitting
performed over independent units or asymptotically separated mixing blocks;
(ii) the foldwise predictor satisfies
$\widehat q_d^{(-f)}\to q_d^*$ in $L_2$;
(iii) $B_d$ is fixed and bounded;
and (iv) the empirical covariance is consistent in the retained
finite-dimensional basis.
Then the covariance of the out-of-fold residuals converges to
\[
\Sigma_d^*
=
\operatorname{Cov}\!\left(
B_d^\top\epsilon_{t+1}^{(d)}
\right).
\]
For a fixed bounded linear VJP map applied to the centered
retained-basis residuals, the empirical-average route-noise moment
$\widehat R_{k,\ell}$ also converges to its population counterpart.
\end{proposition}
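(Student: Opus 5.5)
The plan is to decompose each out-of-fold residual into the population residual plus a predictor-error term, then let ergodicity handle the first piece and $L_2$ convergence of the predictor handle the second. Write $\widehat\epsilon_{t+1}=X_{t+1}-\widehat q_d^{(-f)}(c_t^{(d)})=\epsilon_{t+1}^{(d)}+\Delta_{t+1}$, where $\Delta_{t+1}=q_d^*(c_t^{(d)})-\widehat q_d^{(-f)}(c_t^{(d)})$. Projecting onto the basis gives $B_d^\top\widehat\epsilon=B_d^\top\epsilon+B_d^\top\Delta$.

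The empirical covariance of the retained-basis residuals then splits into three pieces:
\[
\widehat\Sigma_d=\widehat{\operatorname{Cov}}(B_d^\top\epsilon)+\widehat{\operatorname{Cov}}(B_d^\top\epsilon,B_d^\top\Delta)+\widehat{\operatorname{Cov}}(B_d^\top\Delta,B_d^\top\epsilon)+\widehat{\operatorname{Cov}}(B_d^\top\Delta).
\]
\begin{enumerate}
\item For the first piece, stationarity and ergodicity (i), together with the finite-dimensional consistency assumption (iv), give convergence to $\Sigma_d^*$. Since $B_d$ is fixed and bounded by (iii), $B_d^\top\epsilon$ has finite second moments.
\item For the last piece, $\|B_d^\top\Delta\|\le\|B_d\|\,\|\Delta\|$, so its empirical second moment is at most $\|B_d\|^2$ times the empirical mean of $\|\Delta\|^2$. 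That empirical mean should approximate $\E\|q_d^*-\widehat q_d^{(-f)}\|^2\to0$ by (ii).
\item The cross terms then vanish by Cauchy--Schwarz: the product of an $O_p(1)$ factor and an $o_p(1)$ factor.
\end{enumerate}
Centering by the empirical mean is handled the same way, since the mean of $B_d^\top\Delta$ is controlled by its second moment.

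The main obstacle is step 2. Assumption (ii) is stated in population $L_2$, but we need the \emph{empirical} average of $\|\Delta\|^2$ over held-out points. This is where cross-fitting matters. Conditional on the fit folds, $\widehat q_d^{(-f)}$ is a fixed function, and the held-out units are independent of it: exactly so for independent units, and approximately so for mixing blocks separated by a purge gap. So the conditional expectation of the held-out average equals the population $L_2$ error. For independent units I would conclude with conditional Markov. For mixing blocks I would bound the dependence via a coupling (Berbee-type) argument across the asymptotically separated gap, so that the coupling error vanishes. With finitely many folds, the result then holds fold by fold and hence for the pooled average.

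For the route-noise moment, write $\widehat R_{k,\ell}$ as the empirical average of $(V\tilde\epsilon)^\top(V\tilde\epsilon)$ for the fixed bounded linear VJP map $V$ applied to centered retained-basis residuals. This equals a fixed linear functional, a trace against $V^\top V$ blocks, of the empirical second-moment matrix of those residuals. Continuity of that functional, together with the covariance convergence just established (the random sign $\xi$ leaves second moments unchanged), gives convergence to the population counterpart by the continuous mapping theorem.
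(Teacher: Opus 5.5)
Your proposal is correct and follows the same route as the paper's proof. Both arguments use three ingredients: $L_2$ convergence of the cross-fitted predictor, boundedness of $B_d$, and the ergodic law of large numbers together with (iv) give the covariance limit; the route-noise moment is then a fixed linear (trace) functional of that covariance. Your explicit error decomposition with Cauchy--Schwarz for the cross terms is more detailed than the paper's. More importantly, you pass from population $L_2$ error to the held-out empirical average using cross-fitting independence and a conditional Markov (or coupling) bound, which fills in a step the paper's short proof compresses into ``the ergodic law of large numbers.''
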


\begin{proof}
Assumption~(ii) makes each cross-fitted residual converge in $L_2$
to $\epsilon^{(d)}$. Boundedness of $B_d$ and the ergodic law of
large numbers then give convergence of its first two moments;
assumption~(iv) gives the covariance limit.
For a fixed linear VJP map, each entry of the transported
second-moment matrix is a linear function of the residual covariance,
so convergence follows directly.
\end{proof}
The proposition separates finite-sample estimation error from sampler-model
mismatch. More data reduce the former but cannot repair omitted conditioning
variables or an inadequate predictor class, because these change the
population residual being estimated. For covariance-based samplers, analogous
consistency requires consistent estimation of the moments retained by the
declared covariance model; misspecification of that model remains a
population-level error.

\subsection{How Sampler Error Changes the Estimated Gains}
\label{app:gain-error}

\paragraph{Why drive-dominated routes can favor the open operator.}
For one routed direction, write the open message as $d=s+n$, where $s$ is its
conditional signal, $n$ is orthogonal noise,
$P=\E\|s\|_2^2$, and $R=\E\|n\|_2^2$. Applying a scalar gain $c$ gives
\begin{align}
  \mathcal R(c)
  &= \E\|c(s+n)-s\|_2^2 \\
  &= (1-c)^2P+c^2R, \\
  \mathcal R(c)-\mathcal R(1)
  &= (1-c)^2P-(1-c^2)R.
  \label{eq:drive-boundary-risk-derivation}
\end{align}
Differentiation gives $c^\star=P/(P+R)$. Hence adding useful drive-dependent
signal moves the correctly estimated gain toward the open route; degradation
requires the applied gain to remain too small.

\paragraph{How moment error perturbs the two-route gain.}
Let $\widehat T=T+\Delta_T$ and $\widehat R=R+\Delta_R$, and suppose the
unconstrained population solution
\[
w_{\rm unc}=T^{-1}(T-R)\mathbf 1
\]
lies in the interior of $[0,1]^2$, so that $w^\star=w_{\rm unc}$.
When the perturbations are small enough that the box constraint and PSD
projection remain inactive,
\begin{equation}
\widehat w-w_{\rm unc}
=
T^{-1}\!\left[
\Delta_T(\mathbf 1-w_{\rm unc})-\Delta_R\mathbf 1
\right]
+
O\!\left(\|\Delta_T\|^2+\|\Delta_R\|^2\right).
\label{eq:plugin-gain-perturbation}
\end{equation}
This expansion follows by substituting
\[
(T+\Delta_T)^{-1}
=
T^{-1}-T^{-1}\Delta_TT^{-1}
+
O\!\left(\|\Delta_T\|^2\right)
\]
into the unconstrained plug-in solve and retaining the first-order terms.

Sampler-model mismatch enters through systematic error in $\Delta_R$, while
finite task and noise probes contribute estimation error through both
$\Delta_T$ and $\Delta_R$. If
$b_w=\E[\widehat w]-w_{\rm unc}$ and
$V_w=\operatorname{Cov}(\widehat w)$, then
\begin{equation}
\E\!\left[
\mathcal R(\widehat w)-\mathcal R(w_{\rm unc})
\right]
=
b_w^\top T b_w+\operatorname{tr}(T V_w).
\label{eq:plugin-excess-risk}
\end{equation}
Thus moment error contributes to excess local route risk through both gain bias
and gain variance. This remains a local route-risk statement and does not imply
an end-to-end forecasting guarantee.

\section{Controlled Known-SNR Study}
\label{app:known-snr-oracle}

This appendix specifies the oracle used in
Fig.~\ref{fig:known-snr-failure}.  It lets us measure the conditional-mean gradient and innovation noise separately; ordinary data reveal only their
sum. {For panels (a)--(c), forecasting parameters remain fixed, and gradients and route moments are measured with all backward routes fully open. In panel (c), only the DW controller updates its gains.}

\subsection{Separating Conditional Gradient Signal and Noise}
\label{app:known-snr-signal-noise}
The eight-dimensional process is
\begin{equation}
  x_{t+1}=A_{\rm AR}x_t+\xi_{t+1},\qquad
  A_{\rm AR}=\operatorname{diag}(.995,.98,.95,.9,-.995,-.98,-.95,-.9),
  \label{eq:known-snr-process}
\end{equation}
with independent Gaussian innovations
\(\xi_t\sim\mathcal N(0,\operatorname{diag}(1-a_1^2,\ldots,1-a_8^2))\),
where \(a_j=(A_{\rm AR})_{jj}\).
The variance choice makes every stationary coordinate have unit marginal
variance while retaining distinct positive and sign-alternating memory scales.
For an observed state \(x_t\), the simulator gives the conditional future
exactly:
\begin{equation}
  \bar x_{t+k}:=\E[x_{t+k}\mid x_t]=A_{\rm AR}^k x_t.
  \label{eq:known-snr-conditional-mean}
\end{equation}
All future-noise realizations in one comparison begin at this same observed
state, so the model prediction and its tangent map are held fixed while only
the target innovation changes.  \Eqref{eq:known-snr-conditional-mean}
is written in raw simulator coordinates; the realized target and its
conditional mean are both passed through the checkpoint's same affine training
normalization before the loss is differentiated.

Let
\(\mathcal L_k^{(b)}=K^{-1}\|\hat x_{t+k}-x_{t+k}^{(b)}\|_2^2\) be the horizon loss
for future-noise draw \(b\), and let
\(g_k^{(b)}=\nabla_\theta\mathcal L_k^{(b)}\) be its complete full BPTT parameter
gradient, including every path from horizon \(k\) to the shared parameters.
Because the prediction and its derivative are fixed under the conditioning and
the loss is quadratic,
\begin{equation}
  s_k:=\E_b[g_k^{(b)}\mid x_t]
  =\nabla_\theta K^{-1}
    \|\hat x_{t+k}-\bar x_{t+k}\|_2^2,\qquad
  n_k^{(b)}:=g_k^{(b)}-s_k,\quad \E_b[n_k^{(b)}]=0.
  \label{eq:known-snr-gradient-decomposition}
\end{equation}
Thus the clean full-window diagnostic target is
\(\mu=\sum_{k=1}^{K}s_k\).

The per-horizon reliability quantities include the SNR in the left panel of
Fig.~\ref{fig:known-snr-failure} and the noise fraction reported in the main
text:
\begin{equation}
  \operatorname{SNR}_k=
  \frac{\|s_k\|_2^2}{\E_b\|n_k^{(b)}\|_2^2},
  \qquad
  \rho_k=
  \frac{\E_b\|n_k^{(b)}\|_2^2}
       {\E_b\|g_k^{(b)}\|_2^2}.
  \label{eq:known-snr-gradient-snr}
\end{equation}
The first panel plots total-gradient RMS on the same parameter
coordinates at every horizon, normalized by its value at \(k=1\)
, while Equation~\ref{eq:known-snr-gradient-snr}
reports the signal--innovation split directly.

\subsection{Risk of Including More Forecast Steps}
\label{app:known-snr-prefix-risk}
Define
\(S_k=\sum_{j=1}^{k}s_j\) and
\(N_k^{(b)}=\sum_{j=1}^{k}n_j^{(b)}\).  The prefix update is
\(S_k+N_k^{(b)}\), whereas the clean target remains the full-window signal
\(\mu=S_K\).  Consequently,
\begin{equation}
  \mathcal R_{\le k}
  =\E_b\|S_k+N_k^{(b)}-S_K\|_2^2
  =\underbrace{\|S_k-S_K\|_2^2}_{\text{missing-signal bias}}
   +\underbrace{\E_b\|N_k^{(b)}\|_2^2}_{\text{innovation risk}}.
  \label{eq:known-snr-prefix-decomposition}
\end{equation}
The cross term vanishes by conditional centering.  Full BPTT is the endpoint
\(k=K\): it has zero missing-signal bias but need not have minimum total risk.
This equation is the formal meaning of the middle panel; its finite Monte Carlo
closure error is at most \(0.082\) of the full BPTT risk in the four reported
splits.

\paragraph{Probe protocol.}
The first two panels use \(K=32\)
, 15 independent test trajectories with one rollout start per
trajectory, 64 innovation draws per conditional state, and four independent
replicates.  All curves within a replicate use the same realized gradients.

\subsection{Local Route Risk and Forecasting Performance}
\label{app:known-snr-gain-risk}

This subsection details the protocols for
Fig.~\ref{fig:known-snr-failure}(c)--(d): panel (c) measures local route risk with forecasting parameters fixed, while panel (d) compares forecasting performance after separate training.

For Fig.~\ref{fig:known-snr-failure}(c), three trained full BPTT checkpoints (seeds 0, 1, and 2; $K=32$) each traverse their corresponding training set twice with forecasting parameters fixed. Only the DW controller updates: each batch uses previously committed gains, and
its calibration updates take effect on subsequent batches.

All routing settings in panel (c) are evaluated using the local route-risk objective $\mathcal R(w)$ from Theorem~\ref{thm:optimal-routing} (Eq.~\ref{eq:dual-wiener-risk}), with route moments measured on the fully open graph. Full BPTT corresponds to $w=\mathbf1$. The misplaced setting circularly shifts the DW gain pairs by half the horizon range within each layer. The noise-informed oracle reference computes gains from the exact signal moment $P$ and an estimate of $R$ obtained from 64 draws of the true process. Risk is evaluated with 64 independent draws.

For each seed, risk is summed over all measured local routes and 22 post-warm-up batches, then divided by the corresponding full BPTT risk sum. We report the mean and sample standard deviation of the three seed-level ratios.

Fig.~\ref{fig:known-snr-failure}(d) evaluates separately trained
full BPTT, gradient clipping, Jacobian regularization, TBPTT, and
Internal-DW with three matched seeds (0, 1, and 2) and training horizon
$K=32$. Each run uses its minimum-validation-loss checkpoint.
Clip, JReg, and TBPTT hyperparameters are selected by validation loss
on seed 0 and then fixed across seeds: clipping thresholds
$\{0.1,0.3,1.0\}$ select $0.1$, JReg coefficients $\{0.01,0.1,1.0\}$
select $0.1$, and TBPTT lengths $\{4,8,16\}$ select $4$.
Internal-DW uses fixed default settings without additional tuning.
Test results are not used for either selection.

For each seed, each of 15 test trajectories contributes 64 deterministic, uniformly spaced valid forecast origins. From a 16-step observed history at each origin, the model rolls out autoregressively for 48 steps without
state resets or ground-truth refreshes. We compute relative $L_2$ error using Eq.~\ref{eq:dense-multistart-rel-l2}, averaging equally over forecast steps, origins, and trajectories. We report the mean and sample standard deviation of the three seed-level scores.

\section{Datasets and Training Details}
\label{app:impl-details}

This appendix records the datasets and configurations behind the selected-estimator
comparison in Section~\ref{sec:results}.

\subsection{Datasets}
\label{app:datasets}

Table~\ref{tab:dataset-sizes} summarizes the dataset sizes
after preprocessing. The counted units differ across datasets; their
construction and split protocols are described below.

\begin{table}[!htbp]
\centering
\small
\setlength{\tabcolsep}{4pt}
\caption[Dataset sizes after preprocessing.]{{\textbf{Dataset sizes after preprocessing.}
Counts refer to trajectories, subjects, or sequence segments, as indicated.
State shape excludes the time axis and external-drive features; length is
measured in model steps.}}
\label{tab:dataset-sizes}
\begin{tabular}{@{}llccc@{}}
\toprule
Dataset & Counted unit & Train / Val. / Test & State shape & Length \\
\midrule
MG & Trajectory & 28 / 6 / 6 & $8$ & 2048 \\
NARMA-5 & Trajectory & 28 / 6 / 6 & $8$ & 2048 \\
iEEG & Chunk & {464 / 87 / 87} & {40 or 80} & {1024} \\
Movie fMRI & Subject & 111 / 24 / 24 & $400$ & 284 \\
ETTm1 / ETTm2 & Segment & 135 / 45 / 45 each & $7$ & 256 \\
Shear flow & Clip & 288 / 36 / 36 & $4\times64\times128$ & 64 \\
WeatherBench-2 & Segment & 205 / 22 / 22 & $29\times121\times240$ & 64 \\
\bottomrule
\end{tabular}
\par\smallskip
\begin{minipage}{\linewidth}
\footnotesize
iEEG comprises 29 recordings from 16 participants; counts are pooled across
participants, with models trained separately for each participant.
State dimensions exclude the 512-dimensional CLIP visual inputs.
Shear flow uses 32/4/4 source trajectories;
clips overlap within each trajectory but never cross trajectories or splits.
\end{minipage}
\end{table}

\paragraph{Mackey--Glass.}
For each coordinate, Mackey--Glass~\citep{mackey1977oscillation} follows
\[
\dot{x}(t)
=
\frac{\beta x(t-\tau)}
     {1+x(t-\tau)^{10}}
-\gamma x(t),
\]
with $\tau=30$, $\beta=.2$, and $\gamma=.1$.
We generate eight independent coordinates with identical dynamics and
independent initial histories. The equation is integrated by RK4 (fourth-order Runge–Kutta method) with an
internal step of $.1$, sampled every unit time, and run for a 1000-step
burn-in before retaining each trajectory.
We generate and cache 40 trajectories of length 2048 using a fixed
data-generation seed. For each matched experimental seed, trajectories are
randomly divided into 28/6/6 training/validation/test trajectories.
Output states are standardized using the mean and standard deviation of
the corresponding training trajectories only.

\paragraph{NARMA-5.}
We use a bounded variant of the NARMA benchmark~\citep{atiya2000new}.
For each coordinate, bounded NARMA-5 follows
\[
y_{t+1}
=
\tanh\!\left(
.3y_t
+.05y_t\sum_{j=0}^{4}y_{t-j}
+1.5u_{t-4}u_t
+.1
\right),
\qquad
u_t\sim\mathcal U[0,.5].
\]
The eight coordinates use independently sampled observed input channels, and
the first 200 steps are discarded as burn-in. For both datasets, we generate
and cache 40 trajectories of length 2048 using a fixed data-generation seed.
For each matched experimental seed, trajectories are randomly divided into
28/6/6 training/validation/test trajectories. Output states are standardized
using the mean and standard deviation of the corresponding training
trajectories only; the NARMA inputs retain their original scale.

\paragraph{iEEG.}
{
The iEEG experiment uses 29 movie-encoding recordings from 16 participants
in the dataset of \citet{keles2024multimodal}. The neural state consists of
macro-contact theta-power signals: 15 participants have 80 channels, and
CS62 has 40. Each participant also has aligned 512-dimensional CLIP visual
features, supplied as stimulus inputs.

{The electrophysiology data are distributed in NWB format through
DANDI (Dandiset 000623). Participants watched an approximately eight-minute
excerpt from Hitchcock's \emph{Bang! You're Dead}; the released macroelectrode
signals are sampled at 1000~Hz and bandpass filtered at 0.1--500~Hz
\citep{keles2024multimodal}. Our upstream preprocessing retains macroelectrode channels covering the amygdala, hippocampus, anterior cingulate cortex, pre-SMA, and vmPFC. These signals undergo notch filtering at 60~Hz and its
harmonics, 0.1-Hz high-pass filtering, and common-average referencing.
Theta power is computed using five-cycle Morlet wavelets at 4, 5, 6, 7,
and 8~Hz. Power at each frequency is z-scored over time before averaging
across frequencies. The resulting time courses are resampled to 500~Hz
and cropped to the movie interval before the subsequent 50-Hz model-input
preparation described below.}

For each participant, we split the movie timeline into 70/15/15
training/validation/test intervals. Repeated viewings use the same movie-time
boundaries, preventing the same movie segment from appearing in different
splits through different recordings. Repeated recordings are first truncated
to their common length. We exclude 256 steps (5.12~s) immediately before
each split boundary, not on both sides; no additional 2-s edge trimming is
applied. The existing FIF signals preserve recording-wide Morlet z-scoring
and noncausal filtering. Zero-phase FIR anti-alias downsampling also precedes
the split. Therefore, preprocessing is not entirely split-local or strictly
causal. At the final sampling rate of 50~Hz, each model step represents 20~ms.
The existing 512-dimensional visual features are sampled at 25 frames/s
and aligned by holding the most recent movie frame; audio is not included.

Each processed recording segment is divided into non-overlapping
1024-step chunks (20.48~s each), discarding incomplete trailing chunks.
Chunks never cross recording or split boundaries. Multiple recordings
from one participant are pooled only within their corresponding splits.
Across participants, this yields 464/87/87 training/validation/test chunks,
or 475,136/89,088/89,088 retained model steps. These are post-processing
counts, not raw sample counts or independent participant counts.

Neural channels and CLIP feature dimensions are standardized separately
for each participant using the mean and standard deviation computed
from that participant's training chunks only. The same statistics are
applied unchanged to validation and test chunks; no statistics or training
data are pooled across participants. Constant dimensions use a unit scale.
This training-only claim applies to the second-stage normalization, not to
the upstream processing described above. Pooled counts are given
in Table~\ref{tab:dataset-sizes}. Forecasting uses $K=64$ and test horizons
1--96 (up to 1.92~s); reliability diagnostics use horizons 1--64.
}

\paragraph{Movie fMRI.}
Movie fMRI uses the HCP 7T MOVIE1 AP acquisition~\citep{vanessen2013hcp}.
{Data were acquired using gradient-echo EPI with TR~=~1~s,
TE~=~22.2~ms, a flip angle of $45^{\circ}$, 1.6-mm isotropic voxels,
85 slices, and a multiband factor of 5, following the HCP 7T acquisition
protocol.\footnote{\url{https://www.humanconnectome.org/hcp-protocols-ya-7t-imaging}}}
We start from the
HCP-provided {minimally preprocessed and ICA-FIX-denoised}
\texttt{tfMRI\_MOVIE1\_7T\_AP\_hp2000\_clean.nii.gz}
volumes in MNINonLinear space, which have 1.6-mm isotropic resolution and
a 1-s TR. The 2-mm Schaefer2018 400-parcel, seven-network atlas~\citep{schaefer2018local} is resampled
to the HCP volume grid using nearest-neighbor interpolation. BOLD values are
then averaged within each parcel, and each parcel time series is z-scored
over time within the run.

Each subject sequence contains 20 pre-movie TRs, 244 MOVIE1 TRs, and
20 post-movie TRs. The accompanying movie-derived CLIP representation
contains 256 1664-dimensional features per TR; these are averaged over
the 256 tokens before entering the forecasting model. The packaged data
record a six-TR hemodynamic alignment lag, and no additional temporal shift
is applied by the dataset loader. Of the 159 subjects with complete inputs,
a seed-specific shuffle assigns 111/24/24 subjects to the
training/validation/test splits, so no subject appears in more than one
split.

\paragraph{ETTm1 and ETTm2.}
ETTm1 and ETTm2 contain 15-minute measurements from two electricity-transformer
records~\citep{zhou2021informer}. Each state contains oil temperature and six
load measurements, and all seven coordinates are forecast. The observed
future drive consists of six deterministic calendar coordinates: sine and
cosine encodings of hour of day, day of week, and day of year.

We use the standard chronological partition consisting of the first 12 months
for training, the next four months for validation, and the following four
months for testing. Each split is divided into non-overlapping 256-step units,
producing 135/45/45 training/validation/test units for each dataset. State and calendar
coordinates are standardized separately using per-coordinate statistics
computed from the training units only.

\paragraph{Shear flow.}
Shear flow uses the supplied train/validation/test partitions of
\texttt{shear\_flow} from The Well~\citep{ohana2024well} without reshuffling trajectories. Each
state contains the scalar tracer and pressure fields and the two components
of the velocity field. Scalar and vector fields are aligned and flattened
into four channels before deterministic spatial subsampling by a factor of
four, giving $64\times128$ spatial inputs in our experiments. Within each
upstream trajectory, we form 64-frame sequences with stride 16; sequences
never cross trajectories or dataset splits. We apply no temporal subsampling,
supply no external drive, and apply no additional dataset-level
standardization.
{We use the $Re=10^5$, $Sc=2$ configuration, containing
32/4/4 training/validation/test trajectories of 200 frames each.
With clip length 64 and stride 16, each trajectory yields nine clips,
giving 288/36/36 clips in total. Clips overlap within a trajectory,
but no trajectory is shared across splits.}

\paragraph{WeatherBench-2.}
WeatherBench-2~\citep{rasp2024weatherbench2} is extracted from the official $1.5^\circ$ ERA5 archive at
six-hour resolution on its $121\times240$ grid. The state contains
geopotential, temperature, zonal wind, meridional wind, and specific humidity
at 1000, 850, 700, 500, and 250 hPa, together with surface pressure,
2-m temperature, and 10-m zonal and meridional winds, for 29 channels in
total. The observed future drive consists of sine and cosine encodings of
hour of day and day of year.

We use 2007--2015 for training, 2016 for validation, and 2017 for testing,
corresponding to 13,148/1,464/1,460 time steps. Per-channel means and standard
deviations are computed over all training times and spatial locations and
then applied unchanged to every split. No additional spatial subsampling is
performed. {Because training requires holding rollout activations in memory for backpropagation, the training and validation splits are divided into
non-overlapping 64-frame segments (205/22 samples), bounding memory use per
batch as for the other testbeds. Within each training segment,
the burn-in window and $K$-step rollout target are drawn from a randomized
start position each training step (Appendix~\ref{app:training-details}), so successive epochs see varied alignments rather than a single fixed set of 205 examples. Test-time evaluation involves no backward pass and is not subject to this constraint: the reported test relative $L_2$ is computed with a continuous-timeline evaluator that draws dense rollout origins directly from the full test-year array and rolls each out autoregressively through $H_{\text{eval}} = \lceil 1.5K \rceil$ steps before averaging across origins (Appendix~\ref{app:evaluation-protocol}).}

\subsection{Implementation Details for Training}
\label{app:training-details}

Within each dataset, full BPTT and
Internal-DW use the same split, initialization seed, forward model, task loss,
optimizer update, validation rule, and evaluation code. Full BPTT keeps every
route open; activation checkpointing is used where needed to reproduce the
same gradient with lower memory. Internal-DW changes only the backward
operator and adds the calibration probes described in
Appendix~\ref{app:internal-dw-implementation}.

\paragraph{Settings shared across methods.}
{For iEEG, full BPTT and Internal-DW are trained separately
for each participant using three matched seeds (0, 1, and 2).
Model parameters and training data are not shared across participants;
this evaluates within-participant forecasting, not generalization to
unseen participants.} {For all sequence and field datasets, the rollout start within each loaded
segment is drawn uniformly at random at each training step (and held fixed
across ranks within a distributed step), rather than fixed at a single
alignment. For WeatherBench-2 this increases the effective diversity of
burn-in/target alignments seen from the 205 non-overlapping training
segments.}
All matched arms use Adam, train for at most 100
epochs with validation patience 20, and report the minimum-validation-loss
checkpoint.  The primary arms use global gradient-norm cap 1; stronger clipping
is stated with the corresponding control below.  Sequence models use
four residual Mamba blocks~\citep{gu2023mamba} with state dimension 16, convolution width 4,
expansion factor 2, a linear read-in, and a two-layer read-out, and predict an
increment added to the current state.  They use a 16-step input window, a
32-step burn-in, and 16 rollout starts.  Field models use a depth-4 residual
U-Net with channel multiplier 2 and a two-frame input.

\begin{table}[h]
\centering
\small
\setlength{\tabcolsep}{3.5pt}
\caption{\textbf{Dataset-specific training and calibration settings.}
Model numbers give the Mamba hidden width or U-Net base width. ``W/C'' gives
the number of fully open warm-up minibatches and the calibration period.}
\label{tab:dataset-training-settings}
\begin{tabular}{@{}llllll@{}}
\toprule
Data & Model & \(K\)/loss & LR/batch & Noise sampler & W/C \\
\midrule
MG, NARMA & Mamba-128 & 32/Rel-\(L_2\) & \(10^{-4}\)/32 & Generic & 8/4 \\
ETTm1/2 & Mamba-128 & 64/Rel-\(L_2\) & \(10^{-4}\)/32 & Generic & 8/4 \\
iEEG & Mamba-256 & 64/Rel-\(L_2\) & \(10^{-4}\)/32 & {Factor prior} & 8/4 \\
fMRI & Mamba-4096 & 64/Rel-\(L_2\) & \(10^{-4}\)/32 & {Subject-AR prior} & 8/4 \\
Shear & U-Net-32 & 32/Rel-\(L_2\) & \(3\!\times\!10^{-4}\)/4 & {Spectral prior} & 8/4 \\
WB2 & U-Net-32 & 48/Rel-\(L_2\) & \(10^{-4}\)/8 & {Spectral prior} & 2/1 \\
\bottomrule
\end{tabular}
\end{table}

The sequence and WeatherBench-2 runs use weight decay \(10^{-4}\) and the
learning rates in Table~\ref{tab:dataset-training-settings} without decay over the
100-epoch training window; shear uses zero weight decay and cosine decay to
\(10^{-6}\). The gains start fully open, with route- and residual-moment EMA
coefficients .95 and .99.

\paragraph{Selection of Clip, JReg, TBPTT, Static, and DW variants.} 
For each dataset, hyperparameters and variants are selected using seed-0
validation loss and then fixed across three matched seeds. Full BPTT, Internal-DW, JReg, and TBPTT use global gradient-norm clipping
with a threshold of $1.0$. The Clip baseline instead selects its
threshold from $\{0.1, 0.3, 1.0\}$ using seed-0 validation loss;
the selected threshold is then fixed across all three seeds.
JReg coefficients are selected from $\{0.01,0.1,1.0\}$. For iEEG, Clip and JReg selection uses the
equal-weight mean validation loss across participants. JReg uses target
norm $1$ and finite-difference scale $0.001$.
TBPTT candidate segment lengths are $\{8,16\}$ for
Mackey--Glass and shear flow and $\{16,32\}$ for ETTm1/2. The Static control ties every routed merge to $\alpha=m=c$
and selects $c\in\{0.3,0.6,0.9\}$. DW selection chooses between DW-Generic and DW-Prior. Table~\ref{tab:selected-control-hyperparameters} reports all selected values and variants. Calibration cost and update order are given in
Appendix~\ref{app:calibration-cost}.

\begin{table}[htbp]
\centering\footnotesize
\setlength{\tabcolsep}{2pt}
\caption{\textbf{Validation-selected hyperparameters and DW variants.}
Generic and Prior denote DW-Generic and DW-Prior, respectively.
A dash indicates that the control was not evaluated on that dataset.}
\label{tab:selected-control-hyperparameters}
\begin{tabular*}{\textwidth}{@{\extracolsep{\fill}}lcccccccc@{}}
\toprule
Parameter / variant & MG & ETTm1 & ETTm2 & Shear & NARMA-5 & iEEG & fMRI & WB2 \\
\midrule
Clip threshold & $1.0$ & $0.3$ & $0.3$ & $0.1$ & $0.1$ & $0.3$ & $1.0$ & $1.0$ \\
JReg coefficient & $1.0$ & $1.0$ & $1.0$ & $0.1$ & $1.0$ & $0.01$ & $1.0$ & $0.01$ \\
TBPTT segment length & $8$ & $32$ & $32$ & $8$ & -- & -- & -- & -- \\
Static gain $c$ & $0.6$ & $0.3$ & $0.6$ & $0.3$ & -- & -- & -- & -- \\
DW variant & Generic & Generic & Generic & Prior & Generic & Prior & Prior & Prior \\
\bottomrule
\end{tabular*}
\end{table}

\paragraph{Movie-fMRI sampler fitting.}
The fMRI noise sampler is fitted separately for each seed using only its
training subjects. Five-fold subject cross-fitting estimates a time-aligned
shared response from the fitting subjects and a coordinatewise
ridge-regularized AR model for subject-specific deviations. The AR order is
selected from $\{0,1,2,4,8\}$ by cross-fitted multi-horizon MSE and is 8 for
all three seeds. Length-64 residual trajectories from the held-out training
folds are retained intact as noise templates, preserving cross-parcel and
cross-horizon dependence.

\section{How History and Observed Drive Are Measured}
\label{app:history-drive-measurement}

\subsection[Probe Construction and Regime Scores]{{Probe Construction and Regime Scores}}
\label{app:regime-scores}

\paragraph{{Probe construction.}}
These probes are diagnostic tools and are not guaranteed to be optimal or
well suited to every dataset. We use their scores to obtain a coarse,
predictor-dependent indication of how readily observed drive and state
history can be used for prediction, rather than a precise measurement of
a dataset's intrinsic properties.

We use linear ridge regression and a nonlinear nearest-neighbor
(local-analog) regressor. The latter predicts by finding fitting examples
with inputs similar to the current input and averaging their target values.
Each feature is standardized using the fitting examples' mean and standard
deviation, with a standard-deviation floor of \(10^{-5}\). We select the
\(q\) nearest examples by Euclidean distance and average their targets
with equal weights, using all available examples if fewer than \(q\)
are available. Although this average is simple, the selected neighbors
change with the input, so the overall prediction is a nonlinear function
of the input. The nearest-neighbor predictor has no ridge penalty;
\(q\) controls the amount of local averaging.

The drive predictor uses observed future drive for NARMA and WeatherBench-2,
calendar encodings for ETTm1 and ETTm2, and a subject-disjoint estimate of
the movie-locked shared response for fMRI.
For the common probe, drive-only features concatenate the available future
drive up to the prediction horizon. The history-augmented probe also
concatenates \(W\) past states after PCA projection. For movie fMRI,
the history features instead contain PCA-projected deviations from the
shared movie response, and the predicted targets are future deviations
from that response. Table~\ref{tab:regime-probe-settings} gives the
history PCA dimensions, candidate widths, and neighbor counts.

\begin{table}[h]
\centering
\small
\caption{\textbf{Predictive-regime probe settings. }History width \(W\) is measured
in model time steps; \(q\) is the number of nearest neighbors.}
\label{tab:regime-probe-settings}
\begin{tabular}{lrrl}
\toprule
Data & History PCA dimension & \(q\) & Candidate \(W\) \\
\midrule
Mackey--Glass & 8 & 32 & \(\{1,2,4,8,16\}\) \\
NARMA-5 & 8 & 32 & \(\{1,2,4,8\}\) \\
iEEG & 32 & 32 & \(\{1,2,4,8,16\}\) \\
ETTm1 / ETTm2 & 7 & 32 & \(\{1,2,4,8,16,32\}\) \\
Shear flow & 32 & 32 & \(\{1,2,4,8\}\) \\
WeatherBench-2 & 32 & 24 & \(\{1,2,4,8\}\) \\
Movie fMRI & 16 & 32 & \(\{1,2,4,8,16\}\) \\
\bottomrule
\end{tabular}
\end{table}

The linear probe includes an intercept and uses a relative ridge coefficient
of \(0.01\): the diagonal regularization added to the normalized feature
Gram matrix is \(0.01\) times its mean diagonal entry. History width is
selected by minimizing the linear probe's validation MSE relative to the
drive-only MSE, averaged across the evaluated horizons; the nonlinear probe
uses the same selected width. Neighbor counts and the ridge coefficient
are fixed. After selecting the width, both predictors are fitted using
training plus validation examples and evaluated once on the test split;
the reported long-horizon scores average over \(k\ge8\). Feature-scaling
statistics are estimated from fitting examples and applied unchanged to
query examples. The iEEG procedure is performed separately for each
participant, and fMRI fitting and evaluation use disjoint subjects.

\paragraph{{Regime scores.}}
Let \(R_{\rm mean}\) be the held-out risk of a training-mean predictor,
\(R_D\) that of a predictor using the observed future drive, and \(R_{D+H}\)
that of the same predictor augmented with history.  Fig.~\ref{fig:drive-history-boundary-map}
uses
\begin{equation}
  \Psi_D=1-\frac{R_D}{R_{\rm mean}},
  \qquad
  \Psi_H=1-\frac{R_{D+H}}{R_D},
  \label{eq:drive-history-values}
\end{equation}
and reports \(\Delta\Psi_H=\Psi_H-\Psi_H^{\rm null}\), where the null uses
circularly shifted history.

Fig.~\ref{fig:drive-history-boundary-map} first averages the values across
linear and nonlinear probes and then clips negative values to zero because the
map uses only detected predictive value. Table~\ref{tab:predictive-regime-values} reports the corresponding unclipped scores.

\begin{table}[h]
\centering
\small
\setlength{\tabcolsep}{5pt}
\caption{\textbf{Unclipped predictive-regime probe values.}}
\label{tab:predictive-regime-values}
\begin{tabular}{lrrrr}
\toprule
& \multicolumn{2}{c}{Linear probe} & \multicolumn{2}{c}{Nonlinear probe} \\
\cmidrule(lr){2-3}\cmidrule(lr){4-5}
Data & \(\Psi_D\) & \(\Delta\Psi_H\) & \(\Psi_D\) & \(\Delta\Psi_H\) \\
\midrule
Mackey--Glass  & .000 &  .442 & .000 &  .370 \\
NARMA-5        & .857 & \(-.001\) & .283 & \(-.001\) \\
iEEG  & \(-3.2823\) & {.0770} &{\(-.1265\)} & {.0032} \\
Shear flow     & .000 &  .859 & .000 &  .544 \\
WeatherBench-2 & .219 & \(-.179\) & .232 &  .004 \\
ETTm1          & \(-.341\) & .697 & \(-.355\) & .110 \\
ETTm2          & \(-.144\) & .963 & \(-.019\) & .272 \\
Movie fMRI     & .230 &  .039 & .230 &  .016 \\
\bottomrule
\end{tabular}
\end{table}

\subsection{Interpreting the Regime Scores}
\label{app:regime-score-interpretation}
The raw scores measure held-out risk reduction and are not clipped at zero, so
they can be negative. \(\Psi_D<0\) means that the drive-based probe performs worse than the training-mean predictor, while \(\Delta\Psi_H<0\) means that aligned history performs worse than the shifted-history null.  We interpret either outcome as no detected predictive value under the probe, not as negative information.

For autonomous testbeds, the drive-only predictor
is the training mean, so \(\Psi_D=0\) by construction.

\subsection{Validation of the Predictive-Regime Probes}
\label{app:probe-validation}

We perform a controlled Mackey--Glass sweep to verify that the
predictive-regime probes respond to increasing observed drive.

For each of the eight independent coordinates, the driven dynamics are
\begin{equation}
\dot{x}(t)
=
\frac{\beta x(t-\tau)}{1+x(t-\tau)^{10}}
-\gamma x(t)+\lambda_D u(t),
\label{eq:probe-driven-mg}
\end{equation}
where $\tau=30$, $\beta=.2$, and $\gamma=.1$.
The observed drive is constant within each unit sampling interval,
$u(t)=u_j$ for $t\in[j,j+1)$, with
\begin{equation}
u_{j+1}=.9u_j+\sqrt{1-.9^2}\,\epsilon_{j+1},
\qquad
u_0,\epsilon_1,\epsilon_2,\ldots
\overset{\mathrm{iid}}{\sim}\mathcal N(0,1).
\label{eq:probe-mg-drive}
\end{equation}
Thus, $\lambda_D$ controls the amplitude of the additive, unit-variance
AR(1) drive; $\lambda_D=0$ recovers the autonomous system.
The future drive is supplied to the probes as observed input.

\begin{table}[h!]
\centering
\small
\setlength{\tabcolsep}{4pt}
\caption{\textbf{Term magnitudes and probe scores across drive strengths in Driven MG.}
Let $A(t)=\beta x(t-\tau)/(1+x(t-\tau)^{10})-\gamma x(t)$
and $D(t)=\lambda_D u(t)$.
RMS values pool unnormalized training trajectories, time points, and
channels after excluding the first delay window.
Probe scores are long-horizon averages over $k\geq 8$, reported
separately for the linear and nonlinear readouts before averaging
or clipping at zero.}
\vspace{2pt}
\label{tab:driven-mg-term-scales}
\label{tab:probe-validation}
\begin{tabular}{@{}crrrrrrr@{}}
\toprule
& \multicolumn{3}{c}{Term magnitudes}
& \multicolumn{2}{c}{Linear}
& \multicolumn{2}{c}{Nonlinear} \\
\cmidrule(lr){2-4}
\cmidrule(lr){5-6}
\cmidrule(lr){7-8}
$\lambda_D$
& $\mathrm{RMS}(A)$
& $\mathrm{RMS}(D)$
& $\frac{\mathrm{RMS}(D)}{\mathrm{RMS}(A)}$
& $\Psi_D$ & $\Delta\Psi_H$
& $\Psi_D$ & $\Delta\Psi_H$ \\
\midrule
0.03150 & 0.0420 & 0.0315 & 0.75 & 0.023 & 0.802 & 0.040 & 0.522 \\
0.05925 & 0.0591 & 0.0592 & 1.00 & 0.144 & 0.757 & 0.135 & 0.533 \\
0.08000 & 0.0724 & 0.0799 & 1.10 & 0.339 & 0.680 & 0.283 & 0.463 \\
0.13000 & 0.1039 & 0.1299 & 1.25 & 0.645 & 0.568 & 0.504 & 0.362 \\
0.34000 & 0.2424 & 0.3398 & 1.40 & 0.930 & 0.598 & 0.710 & 0.086 \\
\bottomrule
\end{tabular}
\end{table}

We hold the
autonomous dynamics, underlying drive realizations, and data splits fixed, and
vary only the drive scale $\lambda_D$, showing settings from $.03$ to $.34$.  The corresponding magnitudes of autonomous and driven terms are shown in Table~\ref{tab:driven-mg-term-scales}. At each setting, we
compute the drive-only gain $\Psi_D$ and additional history gain
$\Delta\Psi_H$ defined above, using long-horizon averages over $k\geq 8$.
Table~\ref{tab:probe-validation} also reports the linear and nonlinear
probe scores separately, before averaging or clipping.
For the plotted coordinates in Figure~\ref{fig:probe-validation}, the linear and nonlinear probe estimates are first averaged, after which each
coordinate is clipped at zero, following the construction used in the main
predictive-regime figure.

\begin{figure}[htbp!]
  \centering
  \includegraphics[width=0.55\linewidth]{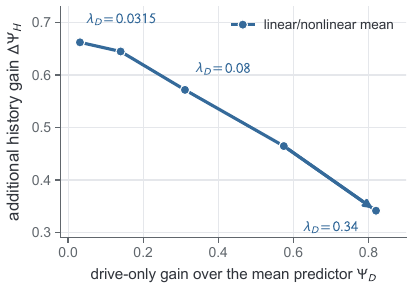}
  \setlength{\abovecaptionskip}{2pt}
  \caption[Probe response to controlled drive strength.]{
  \textbf{Probe response to controlled drive strength.}
  Each point shows the predictive-regime coordinates obtained at one
  Mackey--Glass drive scale $\lambda_D$; the line connects settings in order of
  increasing drive strength, over $0.03150\leq\lambda_D\leq0.34$.
  The autonomous dynamics, drive realizations, and data splits are held fixed.}
  \label{fig:probe-validation}
\end{figure}

Figure~\ref{fig:probe-validation} shows a rightward movement over the
displayed range: as $\lambda_D$ increases from $.03$ to $.34$, the measured
drive-only gain $\Psi_D$ increases from approximately $.03$ to $.82$.
Meanwhile, $\Delta\Psi_H$ decreases while remaining positive, suggesting that the additional predictive value
of history beyond the observed drive weakens.
Thus, the probes respond in the intended direction to increasing observed
drive while still detecting additional history value.

\section{Controlled Test of Strong Observed Drive}
\label{app:driven-mg-boundary}

We isolate the effect of observed drive by comparing the autonomous
Mackey--Glass testbed with a variant that retains the same delay dynamics and
adds a unit-variance AR(1) drive with coefficient .9 and scale .08. For each of the eight independent channels, the observed forcing is
an independent stationary AR(1) process with coefficient .9 and unit variance,
generated using Gaussian innovations.  It enters the Mackey--Glass dynamics
additively with scale .08 and is held constant within each unit sampling
interval.  We retain the autonomous parameters {\(\tau=30\), \(\beta=0.2\), \(\gamma=0.1\)}, and
exponent 10, and integrate the dynamics using RK4 (fourth-order Runge–Kutta method) with step size .1 after a
1000-step burn-in.  We generate 40 trajectories of length 2048 and split them
into 28/6/6 training, validation, and test trajectories.  The eight future
forcing channels are provided to the forecaster as observed future drive. The
driven variant continues to have substantial long-horizon history value, but
now also has a strong observed drive.  Both conditions use \(K=32\), the same
matched training protocol, and dense \(1{:}48\) relative-\(L_2\) evaluation.

\begin{table}[h!]
\centering
\caption{\textbf{Increasing observed drive reverses the effect of
Internal-DW on Mackey--Glass.}  Values are mean\(\pm\)standard deviation over
three matched seeds; lower is better.  Relative \(L_2\) is normalized within
each condition, so comparisons are made within rows.}
\label{tab:driven-mg-boundary}
\small
\setlength{\tabcolsep}{5pt}
\begin{tabular}{lccc}
\toprule
condition & Full BPTT & Internal-DW & DW vs. Full \\
\midrule
weak drive (autonomous) & \(1.05412\pm.00436\) & \(\mathbf{.99966\pm.00958}\) & 5.17\% lower \\
strong observed drive & \(\mathbf{.73489\pm.01064}\) & \(.79157\pm.00350\) & 7.71\% higher \\
\bottomrule
\end{tabular}
\end{table}

{As shown in Table~\ref{tab:driven-mg-boundary}, }Internal-DW improves the autonomous condition but is worse after
the observed drive is strengthened. This controlled comparison shows that
substantial history alone does not guarantee improvement and provides a
concrete stress test of the current estimator's drive-related boundary.

\section{How Forecasting Error Is Evaluated}
\label{app:evaluation-protocol}

Let \(\mathcal U\) denote the held-out test units and \(\mathcal O_u\) the
forecast origins selected within unit \(u\).  We roll the model out once from
each origin without resets through \(H_{\rm eval}=\lceil1.5K\rceil\) and compute
\Eqref{eq:dense-multistart-rel-l2}.
We use at most 64 deterministic evenly spaced origins per test unit, remove
duplicate absolute origins, average origins within a unit, and then weight
units equally.  The sole headline score averages relative \(L_2\) over
\(1{:}H_{\rm eval}\); trained-window, post-window, and terminal summaries are
not reported as separate performance criteria.

\section{How Per-Horizon Gradient Utility Is Measured}
\label{app:result-diagnostics}
\label{app:heldout-gradient-utility}
This appendix gives the protocol behind
Fig.~\ref{fig:application-data-diagnostics}a. At a fixed seed-0 full BPTT
checkpoint, we pair a calibration example $A$ from the training split with a
disjoint example $B$ from the test split, using at most one
common-relative-position rollout start from each loader item. For every
forecast step $k$, we compute the parameter gradient $g_k^A$ produced by that
step's loss and the full-horizon held-out BPTT gradient $\bar g^B$. All horizons
and pairs within a dataset use one fixed sample of parameter coordinates with
the corresponding full-dimensional scaling. We summarize each horizon by its
relative gradient norm and held-out utility:
\begin{equation}
A(k):=\frac{\lVert g_k^A\rVert_2}{\lVert g_1^A\rVert_2},
\qquad
U(k):=
\left\langle
\frac{g_k^A}{\lVert g_k^A\rVert_2},
\bar g^B
\right\rangle,
\qquad
\bar g^B:=\frac{1}{K}\sum_{j=1}^{K}g_j^B .
\label{eq:real-per-horizon-gradient-utility}
\end{equation}

Here, $A(k)$ measures relative gradient magnitude, while $U(k)=\langle g_k^A/\|g_k^A\|,\bar g^B\rangle$ measures
alignment with the held-out full-horizon gradient. Under the infinitesimal update
\(-\epsilon g_k^A/\|g_k^A\|\), the held-out rollout loss changes by
\(-\epsilon U(k)+O(\epsilon^2)\); hence \(U(k)<0\) means that the training
direction raises held-out loss.  The black curve and band in the figure are the
median and 10--90\% range of \(A(k)\), while the purple curve and band are the
median and interquartile range of \(U(k)\).  The displayed percentage is the
fraction of forecast steps whose median \(U(k)\) is negative.
The numbers of disjoint train--test pairs are 6 for Mackey--Glass and
NARMA-5, 8 for ETTm1 and ETTm2, {3 per single-recording iEEG participant and 6 per two-recording participant}, 4 for movie fMRI and shear flow,
and 8 for WeatherBench-2.
{The iEEG probe uses seed 0 for each of 16 participants. Within-participant medians are aggregated by taking the median across participants at each horizon. Its gray and purple bands are respectively the 10th--90th and 25th--75th percentiles across participant curves. The {70\% negative-horizon annotation (45 of 64 forecast steps, rounded to the nearest whole percent)} is computed from this displayed median, not by averaging participant-specific negative fractions.}

\section{Dense-Horizon Relative-$L_2$ Results}
\label{app:quantitative-table}

Table~\ref{tab:absolute-relative-l2} reports the dense-horizon test relative
$L_2$ values underlying Fig.~\ref{fig:assigned-estimator-performance}.
The values are computed using the same evaluation horizons and forecast origins
and are reported as mean $\pm$ standard deviation over three matched seeds;
lower is better. Because the normalization is dataset-specific, values are intended for within-dataset rather than across-dataset comparison.

\begin{table*}[t]
\centering
\caption{\textbf{Dense-horizon relative-$L_2$ values underlying
Fig.~\ref{fig:assigned-estimator-performance}.} Each entry reports the
mean $\pm$ sample standard deviation over three matched seeds. Lower is better.}
\label{tab:absolute-relative-l2}

\footnotesize
\setlength{\tabcolsep}{2pt}
\renewcommand{\arraystretch}{1.12}

\textit{History-dominated, weak drive}

\smallskip

\begin{tabular*}{\textwidth}{@{\extracolsep{\fill}}lcccc@{}}
\toprule
Method & MG & ETTm1 & ETTm2 & Shear flow \\
\midrule
Full BPTT
& {$1.0541\pm0.0044$}
& {$0.9422\pm0.0252$}
& {$0.8684\pm0.0913$}
& {$0.1639\pm0.0354$} \\

Clip
& {$1.0563\pm0.0040$}
& {$0.9497\pm0.0348$}
& {$0.8817\pm0.0771$}
& {$0.1383\pm0.0060$} \\

JReg
& {$1.0556\pm0.0069$}
& {$0.9274\pm0.0293$}
& {$0.8700\pm0.0910$}
& {$0.1397\pm0.0021$} \\

TBPTT
& {$1.0344\pm0.0172$}
& {$0.9340\pm0.0508$}
& {$0.8650\pm0.0552$}
& {$\boldsymbol{0.1322\pm0.0046}$} \\

Static
& {$1.0047\pm0.0073$}
& {$0.9250\pm0.0113$}
& {$0.8043\pm0.0686$}
& {$0.2330\pm0.0057$} \\

Internal-DW
& {$\boldsymbol{0.9997\pm0.0096}$}
& {$\boldsymbol{0.8786\pm0.0176}$}
& {$\boldsymbol{0.7489\pm0.0489}$}
& {$0.1452\pm0.0247$} \\
\bottomrule
\end{tabular*}

\vspace{0.7em}

\textit{Identified boundaries}

\smallskip

\begin{tabular*}{\textwidth}{@{\extracolsep{\fill}}lcccc@{}}
\toprule
Method & NARMA-5 & iEEG & Movie fMRI & WeatherBench-2 \\
\midrule
Full BPTT
& {$0.8707\pm0.0150$}
& {{$1.8871\pm0.0309$}}
& {$\boldsymbol{0.9221\pm0.0109}$}
& {$0.6111\pm0.0005$} \\

Clip
& {$\boldsymbol{0.8068\pm0.0024}$}
& {$1.8846\pm0.0291$}
& {$0.9257\pm0.0141$}
& {$0.5994\pm0.0024$} \\

JReg
& {$0.8731\pm0.0115$}
& {$1.8875\pm0.0312$}
& {$0.9256\pm0.0135$}
& {$\boldsymbol{0.5962\pm0.0010}$} \\

Internal-DW
& {$0.9391\pm0.0138$}
& {{$\boldsymbol{1.8740\pm0.0316}$}}
& {$0.9842\pm0.0211$}
& {$0.6117\pm0.0029$} \\
\bottomrule
\end{tabular*}
\end{table*}

\paragraph{Matched-seed uncertainty.}
For each dataset in Table~\ref{tab:absolute-relative-l2}, we form $d_s=L_{\mathrm{DW},s}-L_{\mathrm{Full},s}$ for matched training seeds $s\in\{0,1,2\}$. We report $\bar d$ and the pointwise 95\% paired Student-$t$ interval $\bar d\pm t_{0.975,2}s_d/\sqrt{3}$, where $s_d$ is the sample standard deviation of the three paired differences. Negative differences favor Internal-DW. For iEEG, each seed-level error first averages the same 16 participants equally.
\begin{center}
\footnotesize
\setlength{\tabcolsep}{4pt}
\begin{tabular}{lrr}
\toprule
Dataset & $\bar d$ & Pointwise 95\% CI \\
\midrule
MG & $-0.05446$ & $[-0.08212, -0.02679]$ \\
ETTm1 & $-0.06357$ & $[-0.14635, +0.01922]$ \\
ETTm2 & $-0.11948$ & $[-0.33903, +0.10007]$ \\
Shear flow & $-0.01863$ & $[-0.04541, +0.00816]$ \\
NARMA-5 & $+0.06843$ & $[+0.04108, +0.09578]$ \\
iEEG & $-0.01312$ & $[-0.03130, +0.00507]$ \\
Movie fMRI & $+0.06212$ & $[-0.01621, +0.14045]$ \\
WeatherBench-2 & $+0.00057$ & $[-0.00789, +0.00903]$ \\
\bottomrule
\end{tabular}
\end{center}

Internal-DW has a negative mean paired difference on all four history-dominated, weak-drive testbeds. The pointwise 95\% interval lies entirely below zero for MG. Although the intervals for ETTm1, ETTm2, and shear flow include zero, all three seed-level differences are negative in each case, indicating a consistent direction of improvement across the evaluated runs.

\section{End-to-end training time}
\label{app:computation-time}

We measure the wall-clock cost of the assigned full BPTT and
Internal-DW training configurations on an NVIDIA H200 NVL GPU. Within each
matched pair, the two arms use the same dataset, model, training horizon,
batch geometry, optimizer, and physical GPU. We discard one warm-up epoch
(iEEG uses 40 warm-up training batches) and use the median
time of the next three epochs. Table~\ref{tab:training-time} reports the
mean and sample standard deviation of these within-run medians over three
paired repeats; the execution order is alternated across repeats to reduce
order and thermal effects. CUDA is explicitly synchronized at both timing
boundaries. The timer covers the training loop only and therefore excludes
validation and checkpoint I/O.

The Internal-DW measurements include its online covariance probes, routewise
gain solves, and application of the resulting gains. Construction of a
train-only domain-prior artifact, when used, is a one-time preprocessing step
and is excluded. Thus, this is an end-to-end comparison of the training
implementations used in our forecasting experiments rather than an isolated
solver microbenchmark.

\begin{table}[t]
  \centering
  \caption{\textbf{End-to-end training time on one NVIDIA H200 NVL.}
  Each entry is the mean $\pm$ sample standard deviation across three
  paired timing repeats. Within each run, we discard one warm-up epoch
  (iEEG uses 40 warm-up training batches) and take the median time over the next three epochs. The final column
  summarizes the paired percentage increase of Internal-DW over
  Full BPTT.}
  \label{tab:training-time}
  \small
  \begin{tabular}{lrrr}
    \toprule
    Data
    & Full BPTT (s/epoch)
    & Internal-DW (s/epoch)
    & Increase (\%) \\
    \midrule
    Mackey--Glass
    & $21.28 \pm 0.09$
    & $27.43 \pm 0.42$
    & $28.9 \pm 1.7$ \\
    ETTm1
    & $27.78 \pm 0.06$
    & $35.32 \pm 0.23$
    & $27.1 \pm 0.9$ \\
    ETTm2
    & $28.68 \pm 0.08$
    & $36.31 \pm 0.31$
    & $26.6 \pm 1.0$ \\
    Shear flow
    & $29.79 \pm 0.43$
    & $41.55 \pm 0.80$
    & $39.5 \pm 0.7$ \\
    NARMA-5
    & $22.26 \pm 0.13$
    & $28.31 \pm 0.15$
    & $27.2 \pm 0.2$ \\
    iEEG (per subject)
    & $31.64 \pm 0.03$
    & $44.83 \pm 0.39$
    & $41.7 \pm 1.2$ \\
    Movie fMRI
    & $498.67 \pm 1.98$
    & $599.90 \pm 3.04$
    & $20.3 \pm 0.1$ \\
    WeatherBench-2
    & $32.84 \pm 0.39$
    & $43.38 \pm 0.39$
    & $32.1 \pm 0.9$ \\
    \bottomrule
  \end{tabular}
\end{table}

Across the eight testbeds, the mean paired increase in epoch time ranges from
20.3--41.7\%. This additional training cost reflects the online
estimation and routewise gradient scaling that distinguish Internal-DW from
full BPTT. The forward model is unchanged, and Internal-DW introduces no
inference-time computation.

\section{Per-seed fitted curves}
\label{app:per-seed-curve}

Figure~\ref{fig:per-seed-fitted-curves} decomposes the averaged
training-horizon trends in Fig.~\ref{fig:fixed-horizon-k-sweep} into
individual seeds. For each dataset, seed, and method, we independently fit a
least-squares quadratic constrained to have nonnegative curvature. A boundary
solution may reduce to a linear trend, in which case the fitted optimum lies
at an endpoint of the evaluated range. The gray and blue vertical dashed
lines indicate the fitted optima for Full BPTT and Internal-DW, respectively.

The Internal-DW optimum lies at or to the right of the Full-BPTT optimum in
all 12 dataset--seed comparisons. The shift is strict in 11 comparisons,
while ETTm1 seed 0 gives the same boundary optimum for both methods. Thus, the
shift toward a longer fitted optimal training horizon is not produced by averaging
across seeds.

\begin{figure*}[h]
  \centering
  \setlength{\abovecaptionskip}{3pt}
  \includegraphics[width=\textwidth]
  {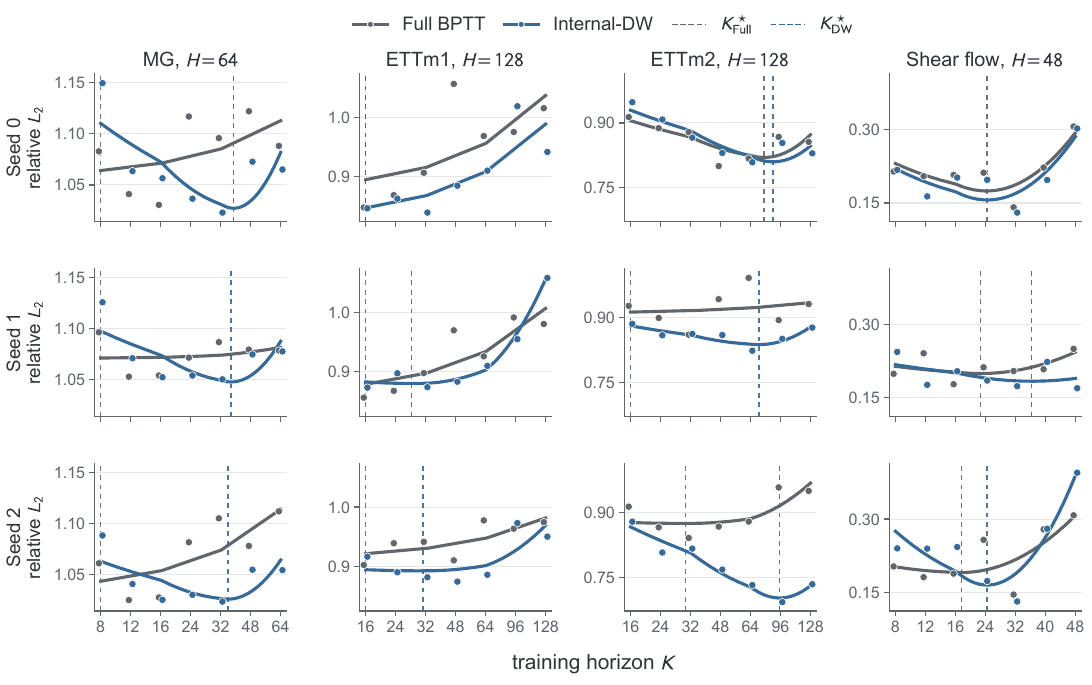}
  \caption{\textbf{Seed-specific fitted training-horizon curves.}
  Columns correspond to datasets and rows to matched seeds. Points show
  relative $L_2$ for checkpoints trained at different horizons $K$ and
  evaluated at the fixed forecast horizon $H$ shown above each column.
  Curves are fitted independently for every seed and method. Gray and blue
  vertical dashed lines mark the fitted optima of Full BPTT and Internal-DW,
  respectively.}
  \label{fig:per-seed-fitted-curves}
\end{figure*}

\end{document}

%% file: math_commands.tex
\usepackage{amsmath,amsfonts,bm}

\def\eqref#1{equation~\ref{#1}}
\def\Eqref#1{Equation~\ref{#1}}

\def\1{\bm{1}}

\DeclareMathAlphabet{\mathsfit}{\encodingdefault}{\sfdefault}{m}{sl}
\SetMathAlphabet{\mathsfit}{bold}{\encodingdefault}{\sfdefault}{bx}{n}

\newcommand{\E}{\mathbb{E}}

